\documentclass{article}

\usepackage{microtype}
\usepackage{graphicx}
\usepackage{subcaption}
\usepackage{booktabs} 
\usepackage{tcolorbox}
\tcbuselibrary{breakable}
\usepackage{tabularx}
\usepackage{fontawesome5}

\usepackage{hyperref}

\usepackage[accepted]{icml2026}

\usepackage{amsmath}
\usepackage{amssymb}
\usepackage{mathtools}
\usepackage{amsthm}
\usepackage{enumitem}

\usepackage[capitalize,noabbrev]{cleveref}

\theoremstyle{plain}
\newtheorem{theorem}{Theorem}[section]
\newtheorem{proposition}[theorem]{Proposition}

\theoremstyle{definition}
\newtheorem{definition}[theorem]{Definition}
\newtheorem{assumption}[theorem]{Assumption}
\theoremstyle{remark}
\newtheorem{remark}[theorem]{Remark}

\usepackage[textsize=tiny]{todonotes}
\usepackage{snaptodo}
\snaptodoset{margin block/.style={font=\tiny}}

\usepackage[page]{appendix}
\usepackage{etoolbox}

\setlist[enumerate]{nosep}
\setlist[itemize]{nosep}

\renewcommand{\appendixtocname}{Appendix Contents.}

\makeatletter
\let\oldappendix\appendices

\renewcommand{\appendices}{%
  \clearpage
  \renewcommand{\thesection}{\Roman{section}}
  \let\tf@toc\tf@app
  \addtocontents{app}{\protect\setcounter{tocdepth}{2}}
  \immediate\write\@auxout{%
    \string\let\string\tf@toc\string\tf@app^^J
  }
  \oldappendix
}%

\newcommand{\listofappendices}{%
  \begingroup
  \renewcommand{\contentsname}{\appendixtocname}
  \let\@oldstarttoc\@starttoc
  \def\@starttoc##1{\@oldstarttoc{app}}
  \tableofcontents%
  \endgroup
}

\makeatother

\icmltitlerunning{Tailoring Strictly Proper Scoring Rules for Downstream Tasks: An Application to Causal Inference}

\begin{document}

\twocolumn[
  \icmltitle{Tailoring Strictly Proper Scoring Rules for Downstream Tasks: An Application to Causal Inference}



  \icmlsetsymbol{equal}{*}

  \begin{icmlauthorlist}
    \icmlauthor{Roman Plaud}{equal,ipparis,onepoint}
    \icmlauthor{Alexandre Perez-Lebel}{equal,fundamental}
    \icmlauthor{Antoine Saillenfest}{onepoint}
    \icmlauthor{Thomas Bonald}{ipparis}
    \icmlauthor{Marine Le Morvan}{inria}
    \icmlauthor{Gaël Varoquaux}{inria,probabl}
    \icmlauthor{Matthieu Labeau}{ipparis}
  \end{icmlauthorlist}

  \icmlaffiliation{ipparis}{Institut Polytechnique de Paris}
  \icmlaffiliation{onepoint}{Onepoint, France}
  \icmlaffiliation{fundamental}{Fundamental Technologies}
  \icmlaffiliation{inria}{SODA Team, INRIA Saclay, Palaiseau}
  \icmlaffiliation{probabl}{Probabl, France}

  \icmlcorrespondingauthor{Roman Plaud}{plaud.roman@gmail.com}

  \icmlkeywords{Machine Learning, ICML}

  \vskip 0.3in
]



\printAffiliationsAndNotice{}  

\begin{abstract}
Probabilistic models are typically trained using task-agnostic objectives like log-loss, which can lead to significant errors in downstream estimation. This disconnect is especially critical in Inverse Probability Weighting (IPW) for causal inference, where propensity score errors near $0$ and $1$ often lead to high bias and variance. We propose a principled framework for deriving task-specific strictly proper scoring rules by matching the local curvature of the downstream error metric. We apply this to the Average Treatment Effect (ATE) estimation, deriving a closed-form loss and its corresponding canonical probability mapping that can be readily integrated with any model like a neural network or a gradient boosting algorithm. Extensive evaluations on causal inference benchmarks demonstrate that our tailored objective consistently outperforms standard likelihood-based and covariate-balancing approaches.
\end{abstract}
\begin{center}
\vspace{-5mm}
\faGithub \quad \href{https://github.com/RomanPlaud/tailored-psr.git}{\texttt{tailored-psr}}
\end{center}

\section{Introduction}

Many modern applications of machine learning involve a two-step mechanism in which we first estimate conditional probabilities, which then serve as inputs for a specific downstream task. In this setting, which ranges from classification to risk assessment or causal inference, the ultimate target metric is computed on the downstream estimator, not the likelihood of the model itself.

Despite this two-step structure, the training of these probabilistic models often relies on log-loss minimization. While statistically well-principled—minimizing the Kullback-Leibler divergence to the underlying oracle distribution—this objective is agnostic to the downstream task. This disconnect creates a fundamental mismatch: a model can achieve low log-loss while inducing large errors in the final estimand.

This is particularly striking in Inverse Probability Weighting (IPW, \citet{HorvitzThompson1952}) estimators in causal inference, where small estimation errors near the boundaries (\textit{i.e.} approaching $0$ or $1$) of the binary conditional probability distribution can cause the bias and variance of the target estimand to explode \citep{kang2007demystifying}. Standard remedies for this mismatch often rely on \textit{post-hoc} heuristics, such as weight trimming or clipping \citep{crump2009dealing}, which sacrifice consistency to control variance. Other approaches attempt to reconnect these two steps by internalizing a covariate balance constraint directly within the propensity estimation \citep{Hainmueller_2012,ImaiRatkovic2014, Zubizarreta03072015}. 
However, the theoretical link between optimizing covariate balance and minimizing the bias of the target estimand is often tenuous and rarely strictly holds~\cite{brunssmith2022outcomeassumptionsdualitytheory}. This motivated us to ask:
\begin{quote}
\centering
\textit{Can we tailor the training objective to the sensitivity of the downstream task?}
\end{quote}

In this work, we propose a direct, principled solution: a general framework for deriving task-specific strictly proper scoring rules that internalize this downstream geometry. By establishing an upper bound on the downstream estimator Mean Squared Error (MSE), we analyze its curvature to characterize its \textit{local sensitivity} with respect to probability estimates. We then analytically construct the unique strictly proper scoring rule that minimizes this theoretical bound. This approach bridges the gap between probability learning and downstream estimation, ensuring that the training objective penalizes errors most severely in the regions of the simplex where the downstream task is most sensitive.

While the proposed framework applies more broadly, we illustrate its utility in the estimation of the Average Treatment Effect (ATE) via IPW in causal inference. We derive a novel, strictly proper scoring rule whose associated divergence appropriately penalizes errors near boundaries, where IPW is most vulnerable.
Our contributions are as follows:
\begin{itemize}[leftmargin=2ex,topsep=0pt, itemsep=0ex]
    \item We formalize a general framework for constructing tailored strictly proper scoring rules by matching the local curvature of downstream estimation errors.
    \item We apply this framework to causal inference, deriving a novel, closed-form loss function that minimizes a local upper bound on the Mean Squared Error (MSE) of Inverse Probability Weighting (IPW) estimators.
    \item We derive the associated canonical probability mapping---the final model activation function required to guarantee convexity with respect to the logits---ensuring stable optimization for modern gradient-based learners.
    \item We demonstrate via extensive benchmarks that our loss reduces estimation error for IPW, Hajek~\cite{hajek1971comment}, and AIPW~\citep{aipw} compared to standard likelihood-based or balancing approaches.
\end{itemize}

\textbf{Notation}. To bridge the terminology between causal inference and machine learning, we treat the propensity score $e(x)$ and probability $p(x)$ as interchangeable. Similarly, we formulate our training objectives as strictly proper scoring rules, referred to as loss functions.

\section{Related Work and Background}

\subsection{Strictly Proper Scoring Rules}
\label{sec:background_psr}

We ground our framework in the theory of strictly proper scoring rules, which provides a rigorous basis for probability estimation. A scoring rule $\ell: \{0,1\} \times [0,1] \to \mathbb{R}$ is defined as \textit{strictly proper}\footnote{We restrict here to the binary setting as our application falls in this setup, but these definitions can be easily extended to multiclass.} if the expected score is uniquely minimized when the estimated probability $q$ coincides with the true probability $p$, i.e., 
$p=\mathrm{argmin}_{q \in [0,1]} \mathbb{E}_{y \sim p}[\ell(y, q)].$ 

As established by \citet{buja2005loss} and \citet{gneiting2007strictly}, every such rule $\ell$ is uniquely characterized (up to constants) by a convex entropy function $H_\ell$, or alternatively its second derivative, the weight function $w_\ell(q) = H_\ell''(q)$.
Every proper scoring rule induces a divergence $d_\ell(p, q)$, defined as the difference between the expected score and the negative entropy:
\begin{equation}
    d_\ell(p, q) \coloneqq \mathbb{E}_{y \sim p}[\ell(y, q)] - \mathbb{E}_{y \sim p}[\ell(y, p)].
\end{equation}
The local \textit{curvature} of this divergence corresponds to the weight function:
\begin{equation}
\label{eq:weight_hessian_link}
    \frac{\partial^2}{\partial q^2} d_\ell(p, q) \bigg|_{q=p} = w_\ell(p).
\end{equation}
For example, the standard log-loss corresponds to the Shannon entropy, inducing the Kullback-Leibler (KL) divergence and the specific weight function $w(q) = \frac{1}{q(1-q)}$.

\textbf{Canonical Link.} A crucial concept for optimization is the \textit{canonical link function} associated with $\ell$ : $g_\ell: [0,1] \to \mathbb{R}$. It is defined by the relation $g_\ell' = w_\ell$, mapping the probability simplex to logits. In the context of machine learning probabilistic models, this dictates the choice of the final activation function, which must be the inverse canonical link $g_\ell^{-1}$, and which we call in this work the \textit{canonical probability mapping} $\sigma_\ell$. Pairing a loss with this specific activation ensures that the objective function is convex with respect to the linear predictor (the model outputs before the final mapping), guaranteeing stability for gradient-based optimization \citep{JMLR:v11:reid10a}. Extending the log-loss example, integrating its weight function yields the logit function $g(q) = \log(\frac{q}{1-q})$ as its canonical link. Consequently, its canonical probability mapping $g^{-1}$ is the standard sigmoid activation function.

\subsection{Covariate Balancing and Robust Estimation}
\label{sec:background_balancing}

Standard propensity score estimation typically minimizes the log-loss. While producing consistent probability estimates under correct model specification, this objective ignores the downstream causal task, and a small log-loss can result in catastrophic error on the downstream estimator \citep{kang2007demystifying}.

To mitigate this, prior work attempts to bridge the gap between prediction and estimation through covariate balance constraints---i.e., the degree to which control and treated groups have similar feature distributions after reweighting. Methods like Covariate Balancing Propensity Score (CBPS) \citep{ImaiRatkovic2014} attempt to internalize this goal during probability learning~\citep{shang2025robust}. Entropy Balancing \citep{Hainmueller_2012} and Stable Balancing Weights (SBW) \citep{Zubizarreta03072015} solve directly for weights that satisfy moment conditions, bypassing the probability estimation entirely. Similarly, Kernel Balancing \citep{hazlett2016kernelbalancingflexiblenonparametric} employs kernel methods to achieve balance in a reproducing kernel Hilbert space. While balance serves as a proxy for bias reduction \citep{CannasArpino2019}, this relationship strictly holds only under specific outcome modeling assumptions \citep{brunssmith2022outcomeassumptionsdualitytheory} and often yields asymptotic bias under local model misspecification \citep{fan2016improving}.

More recently, methods like RieszNet \citep{chernozhukov2022} attempt to learn the Riesz representer of the ATE functional directly. Alternatively, frameworks like Double/Debiased Machine Learning (DML) \cite{chernozhukov2018double} rely on Neyman orthogonality and cross-fitting to mitigate the asymptotic regularization and overfitting biases inherent in ML nuisance models. However, while DML successfully provides asymptotic guarantees at the estimator level (e.g., via AIPW,~\citealp{aipw}), it typically leaves the underlying probability estimation phase untouched.

Closest to our work, \citet{zhao_cbsr} pioneered the application of proper scoring rule theory to causal inference with Covariate Balancing Scoring Rules (CBSR), deriving loss functions whose gradients recover specific covariate balancing conditions. However, enforcing these moment conditions necessitates breaking the association between the proper scoring rule and its canonical probability mapping. This results in optimization that is prone to instability (e.g., exploding gradients), preventing the integration of this framework into deep learning architectures.

Our approach contrasts with, and complements, these existing paradigms. Unlike methods that alter the target estimand (e.g., Overlap Weighting; \citealp{li2018balancing}) or apply \textit{post-hoc} stabilization like clipping \citep{crump2009dealing} and calibration \citep{pmlr-v244-deshpande24a, pmlr-v275-laan25a, klaassen2025calibrationstrategiesrobustcausal}, our proposed objective acts as an \textit{ante-hoc} regularizer. Complementary to DML, we internalize the specificity of the downstream task directly into the training process, geometrically aligning the probability estimates with the downstream MSE. Finally, unlike CBSR, we achieve this task-specific tailoring while strictly retaining the canonical probability mapping, guaranteeing stable optimization for modern gradient-based learners.

\subsection{Background on Causal Inference}
\label{sec:background_causal}

In this work, we define the target downstream task as estimating the Average Treatment Effect (ATE) within the potential outcomes framework \citep{rubin1974estimating}. We consider a dataset $\mathcal{D} = \{(X_i, T_i, Y_i)\}_{i=1}^N$ with covariates $X_i\in\mathcal{X}$, binary treatment $T_i \in \{0, 1\}$, and observed outcome $Y_i\in\mathbb{R}$.

Under standard assumptions of \textbf{consistency}, \textbf{unconfoundedness}, and \textbf{positivity} (see Appendix~\ref{App:proofs} for definitions), the ATE $\tau_{\mathrm{ATE}} = \mathbb{E}[Y(1) - Y(0)]$ is identifiable via the propensity score $e(x) \coloneqq \mathbb{P}(T=1 \mid X=x)$:
\begin{equation}
\label{eq:ate_ident}
\tau_{\mathrm{ATE}} = \mathbb{E}\left[\frac{Y \cdot T}{e(X)} - \frac{Y \cdot (1-T)}{1-e(X)}\right].
\end{equation}
The standard IPW estimator is constructed by taking the empirical counterpart of Equation~\ref{eq:ate_ident}. Since oracle propensity scores $e(X)$ are unknown, they are replaced by a learned estimate $\hat{e}(X)$ yielding the “plug-in” estimator $\hat\tau_{\mathrm{ATE}}$:
\begin{equation}
\label{eq:ate_ident_bis}
\hat\tau_{\mathrm{ATE}} = \frac{1}{N}\sum_{i=1}^N\left(\frac{Y_i \cdot T_i}{\hat e(X_i)} - \frac{Y_i \cdot (1-T_i)}{1-\hat e(X_i)}\right).
\end{equation}
Our goal is therefore to derive a loss for propensity estimation that minimizes an error related to this specific estimator.

\section{Tailoring losses to downstream tasks}
\label{sec:problem_formulation}

We propose a methodological framework for tailoring loss functions to downstream estimation tasks. We first present a general framework for deriving task-specific strictly proper scoring rules (Section~\ref{sec:general_framework}) and then apply it to the specific problem of ATE estimation via Inverse Probability Weighting (Section~\ref{sec:causal_app}).

\subsection{General Framework: Task-Specific Scoring Rules}
\label{sec:general_framework}

Consider a two-step estimation problem. The first step estimates a binary conditional probability\footnote{We denote the binary target variable as $T$ (deviating from the standard machine learning notation $Y$) to avoid ambiguity with the potential outcome variable $Y$ used in the subsequent causal inference application.} $p(x) = \mathbb{P}(T=1 \mid X=x)$ from sample pairs $(X_i, T_i)$ using a model $\hat{p}$. In the second step, this estimate serves as input for a downstream estimator $\hat{\theta} \coloneqq \hat{\theta}(\hat{p})$. Since $\hat{p}$ is inevitably imperfect---for example due to finite sample variance or model misspecification---estimation errors in the first step propagate to the second, therefore degrading the final estimation of $\theta$.

Let $\mathcal{E}(\theta, \hat{\theta})$ denote the error metric of the downstream estimator (e.g. MSE). We assume this error can be upper-bounded by the expectation of a divergence $d_{\text{task}}(p, \hat{p})$, an assumption that will later be verified in the framework of causal inference:
\begin{tcolorbox}[colback=gray!10, colframe=gray!50, arc=2pt, boxrule=1pt, left=4pt, right=4pt, top=4pt, bottom=4pt]
\begin{assumption}[Upper-bound of the error metric]
\label{assum:bounding}
There exist constants $C_0 \geq 0$ and $C_1 > 0$, both independent of $\hat{p}$, such that:
\begin{equation}
\mathcal{E}(\theta, \hat{\theta}) \leq C_0 + C_1 \cdot \mathbb{E}_{X}\left[d_{\text{task}}(p(X), \hat{p}(X))\right].
\end{equation}
\end{assumption}
\end{tcolorbox}
This formulation isolates the contribution of $\hat{p}$ to the final estimation error. Consequently, minimizing the divergence term on the right-hand side serves as a direct proxy for improving the downstream estimator. We further rely on the following assumption regarding the local geometry of the task error:

\begin{tcolorbox}[colback=gray!10, colframe=gray!50, arc=2pt, boxrule=1pt, left=4pt, right=4pt, top=4pt, bottom=4pt]
\begin{assumption}[Task Sensitivity]
\label{assum:curvature}
For all $p \in (0,1)$, the task-specific divergence $q \mapsto d_{\text{task}}(p, q)$ is twice continuously differentiable and admits a non-degenerate local minimum at the true value $q=p$. Specifically:
\begin{equation}
    \frac{\partial d_{\text{task}}}{\partial q}\bigg|_{q=p} = 0 \quad \text{and} \quad  \frac{\partial^2 d_{\text{task}}}{\partial q^2}\bigg|_{q=p} > 0.
\end{equation}
\end{assumption}
\end{tcolorbox}
Assumption~\ref{assum:curvature} essentially states that the downstream error is minimized at the oracle probability ($q=p$), implying that the first-order derivative vanishes and that the error is locally dominated by a positive second-order term.
The Taylor expansion of the task error around $p$ is therefore given by:
\begin{equation}
\label{eq:task_taylor}
    d_{\text{task}}(p, q) \underset{q\to p}{=} \frac{1}{2} w_{\text{task}}(p) (p - q)^2 + o\left((p-q)^2\right)
\end{equation}
where the weight function $w_{\text{task}}(p) \coloneqq \partial^2_q d_{\text{task}}(p, q) \big|_{q=p}$ characterizes the \textit{local sensitivity} of the downstream estimator, explicitly identifying which regions of the probability simplex $[0,1]$ contribute most severely to the final error.

\subsubsection{Matching the Local Curvature}

Our objective is to train $\hat{p}$ using a loss function $\ell$ that mimics this geometry. Recall that any strictly proper scoring rule is generated by a non-negative weight function $w_\ell$, and its associated divergence defined in Section~\ref{sec:background_psr} locally behaves as:

\begin{equation}
\label{eq:div_taylor}
    d_\ell(p, q) \underset{q\to p}{=} \frac{1}{2} w_\ell(p)(p-q)^2 + o((p-q)^2).
\end{equation}
We therefore derive the optimal task-specific scoring rule by matching the curvature of the loss divergence $d_\ell$ to the curvature of the task divergence $d_{\text{task}}$:
\begin{equation}
\label{eq:weight}
     w_{\ell}(p) = w_{\text{task}}(p).
\end{equation}
We recover the unique proper scoring rule $\ell$ determined by the weight function $w_\ell$ via the following characterization:

\begin{tcolorbox}[colback=gray!10, colframe=gray!50, arc=2pt, boxrule=1pt, left=6pt, right=6pt, top=6pt, bottom=6pt]
\begin{proposition}[Recovery of Tailored Loss]\citep{buja2005loss}.
\label{prop:task_psr}
Let $w_\ell$ defined by Equation~\ref{eq:weight}. Its corresponding strictly proper scoring rule $\ell(t, q):\{0,1\}\times\left[0,1\right]\to\mathbb{R}$ decomposes as:
\begin{equation}
    \ell(t,q) = t \cdot \ell_1(q) + (1-t) \cdot \ell_0(q),
\end{equation}
where the partial losses $\ell_1(\cdot)$ and $\ell_0(\cdot)$ are determined by the differential equations:
\begin{equation*}
    \ell_1'(q) = - w_\ell(q)(1-q) 
    \quad \text{and} \quad 
    \ell_0'(q) = w_\ell(q)q.
\end{equation*}
The integration constants are arbitrary and without loss of generality, we set $\ell_1(1) = \ell_0(0) = 0$.
\end{proposition}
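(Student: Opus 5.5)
We prove the statement by checking that the proposed pair $(\ell_1,\ell_0)$ is strictly proper, and that the curvature of its divergence at $q=p$ equals $w_\ell(p)$. We then argue that it is the unique such rule up to the normalization.

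\textbf{Step 1: stationarity at $q=p$.} Define $\ell_1,\ell_0$ by integrating the stated ODEs,
\[
\ell_1(q)=\int_q^1 w_\ell(s)(1-s)\,ds, \qquad \ell_0(q)=\int_0^q w_\ell(s)s\,ds,
\]
so that $\ell_1(1)=\ell_0(0)=0$. The expected score under $y\sim p$ is $L(p,q)=p\,\ell_1(q)+(1-p)\,\ell_0(q)$. Differentiating in $q$ gives
\[
\partial_q L(p,q)=-p\,w_\ell(q)(1-q)+(1-p)\,w_\ell(q)\,q=w_\ell(q)\,(q-p).
\]
Since $w_\ell=w_{\text{task}}>0$ on $(0,1)$ by Assumption~\ref{assum:curvature}, this derivative is negative for $q<p$ and positive for $q>p$. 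Hence $q\mapsto L(p,q)$ is strictly decreasing then strictly increasing, and $p$ is its unique minimizer on $[0,1]$. This is exactly strict propriety.

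\textbf{Step 2: local curvature.} The divergence is $d_\ell(p,q)=L(p,q)-L(p,p)$. Differentiating $w_\ell(q)(q-p)$ once more and evaluating at $q=p$ gives $\partial_q^2 d_\ell(p,q)|_{q=p}=w_\ell(p)$. This matches Equation~\ref{eq:weight_hessian_link} and the expansion \eqref{eq:div_taylor}, so the constructed rule realizes the prescribed curvature $w_{\text{task}}$. As a consistency check, one can identify the entropy: with $H(p)=-L(p,p)$ (sign convention as in Section~\ref{sec:background_psr}), one computes $H''=w_\ell$.

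\textbf{Step 3: uniqueness.} Conversely, take any differentiable proper rule $t\ell_1+(1-t)\ell_0$, and define $w$ as the curvature of its divergence at $q=p$, i.e. its weight function, so that the hypothesis is $w=w_\ell$. Propriety forces $\partial_q L(p,q)|_{q=p}=0$ for all $p$, that is, $p\,\ell_1'(p)+(1-p)\,\ell_0'(p)=0$. Write $\ell_1'(q)=-(1-q)\,u(q)$ and $\ell_0'(q)=q\,v(q)$. The first-order condition then gives $u=v$, and differentiating once more shows that the curvature equals $u$, so $u=v=w_\ell$. Thus the partial losses are pinned down up to additive constants, which are fixed by the normalization.

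The main obstacle is regularity and integrability at the endpoints. For the causal $w$ one expects $w$ to blow up near $0$ and $1$ (as already happens for the log-loss, $w(q)=1/(q(1-q))$), so the integrals defining $\ell_1(q)$ near $q=1$ and $\ell_0(q)$ near $q=0$ might diverge. In that case I would instead normalize at an interior point, or allow extended real values as in \citet{buja2005loss}. I would state the result on $(0,1)$ and treat the endpoints by limits, noting that they do not affect propriety for $p\in(0,1)$.
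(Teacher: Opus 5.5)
Your proposal is correct, and it argues differently from the paper. The paper gives no proof of its own: it simply defers to \citet{buja2005loss}, i.e.\ to the general (Schervish-type) integral representation of binary proper scoring rules through a weight function. You instead verify everything by hand. The single identity $\partial_q L(p,q)=w_\ell(q)(q-p)$ gives strict propriety (a sign change at $q=p$, since $w_\ell>0$) and also gives the curvature $w_\ell(p)$ at $q=p$. For uniqueness, the first-order condition $p\,\ell_1'(p)+(1-p)\,\ell_0'(p)=0$, combined with the curvature condition, pins down $\ell_1'$ and $\ell_0'$, hence the rule up to two additive constants. The citation buys generality: arbitrary proper rules, not necessarily smooth, with the weight possibly a measure. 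Your route buys a self-contained argument that only needs $w_\ell$ continuous and positive on $(0,1)$. Importantly, it never relies on the endpoint normalization.

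That last point matters, because the obstacle you flag is real here, though not for the reason your log-loss analogy suggests. For the log-loss, $(1-s)\,w(s)=1/s$ is integrable at $1$, so the normalization is harmless; what breaks it is the cubic rate. By Equation~\ref{eq:curvature_sum}, $(1-s)\,w_\ell(s)\ge 2/(1-s)^2$ and $s\,w_\ell(s)\ge 2/s^2$. So both integrals in your Step~1 are $+\infty$ for every $q$. Moreover, whatever the constants, $\ell_1(q)\to-\infty$ as $q\to1$ and $\ell_0(q)\to-\infty$ as $q\to0$, as the closed form in Definition~\ref{def:ipw_psr} confirms. Hence the normalization $\ell_1(1)=\ell_0(0)=0$ in the statement cannot be imposed. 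Your ``extended real values'' option fails too, since $L(p,0)=p\,\ell_1(0)+(1-p)\,\ell_0(0)$ would be an indeterminate $\infty-\infty$. Make your fallback the main construction: normalize at an interior point such as $q=1/2$, run Steps~1--3 verbatim on $(0,1)$, and define endpoint values by limits. For $p\in(0,1)$, the leading terms $p/q^2$ (as $q\to0$) and $(1-p)/(1-q)^2$ (as $q\to1$) force $L(p,q)\to+\infty$. So setting $L(p,0)=L(p,1)=+\infty$ keeps $q=p$ the unique minimizer on $[0,1]$.
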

\end{tcolorbox}

This tailored loss serves as a convex objective that inherently penalizes errors in proportion to their impact on the downstream task.

\subsubsection{Theoretical Guarantee: Linking Loss to Downstream Error}
\label{sec:theoretical_guarantee}

We now formalize the connection between optimizing our tailored scoring rule and minimizing the downstream error.

\begin{tcolorbox}[colback=gray!10, colframe=gray!50, arc=2pt, boxrule=1pt, left=4pt, right=4pt, top=4pt, bottom=4pt]
\begin{proposition}[Alignment with Downstream Error]
\label{prop:optimization_equiv}
Minimizing the expected risk $\mathcal{R}(\hat{p}) \coloneqq \mathbb{E}[\ell(T, \hat{p}(X))]$ is equivalent to minimizing the expected tailored divergence $\mathcal{L}(\hat{p}) \coloneqq \mathbb{E}[d_\ell(p(X), \hat{p}(X))]$.
Furthermore, by construction ($w_\ell = w_{\text{task}}$), the tailored divergence matches the task error geometry locally:
\begin{equation}
    d_\ell(p, q) \underset{q\to p}= d_{\text{task}}(p, q) + o\big((p-q)^2\big).
\end{equation}
\end{proposition}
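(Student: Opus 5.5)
The first claim follows from the tower property: we condition on $X$ and use the definition of $d_\ell$. Since $T \mid X \sim \mathrm{Bernoulli}(p(X))$,
\begin{equation*}
\mathcal{R}(\hat p) = \mathbb{E}_X\big[\mathbb{E}_{T\sim p(X)}[\ell(T,\hat p(X))]\big] = \mathbb{E}_X\big[d_\ell(p(X),\hat p(X))\big] + \mathbb{E}_X\big[\mathbb{E}_{T\sim p(X)}[\ell(T,p(X))]\big].
\end{equation*}
The last term is the expected generalized entropy of the true conditional distribution. It depends only on the data distribution and not on $\hat p$, so $\mathcal{R}(\hat p) = \mathcal{L}(\hat p) + \text{const}$, and the two objectives have the same minimizers. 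Strict properness gives $d_\ell \ge 0$ with equality iff $q=p$, so both are minimized exactly at $\hat p = p$ almost everywhere. We only need to assume the integrability of $\ell(T,p(X))$ so that the constant is finite.

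For the local statement, the plan is a second-order Taylor expansion of $q\mapsto d_\ell(p,q)$ at $q=p$. We use the decomposition from Proposition~\ref{prop:task_psr}: $d_\ell(p,q) = p\ell_1(q)+(1-p)\ell_0(q) - p\ell_1(p)-(1-p)\ell_0(p)$. The derivative in $q$ is
\begin{equation*}
\partial_q d_\ell(p,q) = -p\,w_\ell(q)(1-q) + (1-p)\,w_\ell(q)\,q = w_\ell(q)(q-p).
\end{equation*}
This vanishes at $q=p$. Differentiating once more gives $\partial_q^2 d_\ell(p,q) = w_\ell'(q)(q-p) + w_\ell(q)$, which equals $w_\ell(p)$ at $q=p$, recovering Equation~\ref{eq:weight_hessian_link}. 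To make this rigorous we need $w_\ell = w_{\text{task}}$ to be continuous on $(0,1)$, which Assumption~\ref{assum:curvature} supplies. Alternatively, we can avoid differentiating $w_\ell$ by writing the Taylor formula with integral remainder:
\begin{equation*}
d_\ell(p,q)=\int_p^q w_\ell(s)(s-p)\,ds,
\end{equation*}
and then using continuity of $w_\ell$ at $p$ to obtain $\tfrac12 w_\ell(p)(q-p)^2 + o((q-p)^2)$. This gives Equation~\ref{eq:div_taylor}.

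Finally, Assumption~\ref{assum:curvature} gives Equation~\ref{eq:task_taylor} for $d_{\text{task}}$. Subtracting the two expansions and using $w_\ell(p)=w_{\text{task}}(p)$ from Equation~\ref{eq:weight} cancels the quadratic terms, leaving $d_\ell(p,q)-d_{\text{task}}(p,q) = o((p-q)^2)$.

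The main subtlety is regularity rather than any hard estimate. The $o(\cdot)$ statement is pointwise in $p\in(0,1)$, and $w_{\text{task}}$ may blow up at the boundary. So the claim should be read for fixed interior $p$, with continuity of $w_{\text{task}}$ in a neighbourhood of $p$ guaranteed by the $C^2$ hypothesis. The integral-remainder form handles this cleanly.
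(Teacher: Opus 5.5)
Your proof is correct and follows the same route as the paper. The paper also writes $\mathcal{R}(\hat p)=\mathcal{L}(\hat p)+\mathbb{E}[H_\ell(p(X))]$ and then just combines Equations~\ref{eq:task_taylor} and~\ref{eq:div_taylor}; your computation $\partial_q d_\ell(p,q)=w_\ell(q)(q-p)$ with the integral remainder supplies the derivation of Equation~\ref{eq:div_taylor} that the paper takes for granted.

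One correction: Assumption~\ref{assum:curvature} does not give the continuity of $w_\ell$ at $p$ that your remainder argument needs. It only constrains $q\mapsto d_{\text{task}}(p,q)$ for each fixed $p$, so it does not make $p\mapsto w_{\text{task}}(p)$ continuous (take $d_{\text{task}}(p,q)=\tfrac12 f(p)(q-p)^2$ with an arbitrary positive $f$). You must assume it separately; it holds for the explicit weight in Equation~\ref{eq:curvature_sum}, and the paper's Equation~\ref{eq:div_taylor} tacitly relies on such regularity too.
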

\end{tcolorbox}
\textbf{Proof.}
The optimization equivalence follows directly from the aleatoric-epistemic decomposition of strictly proper scoring rules \citep{gneiting2007strictly, kull2015novel}: 
\begin{equation*}
    \mathbb{E}[\ell(T, \hat{p}(X))] = \mathbb{E}[d_\ell(p(X), \hat{p}(X))]+ \mathbb{E}[H_\ell(p(X))]
\end{equation*}
Since the entropy $H_\ell(p)$ depends only on the data-generating process and is independent of $\hat{p}$, minimizing $\mathcal{R}$ is identical to minimizing $\mathcal{L}$.
The local matching follows from Equation~\ref{eq:task_taylor} and Equation~\ref{eq:div_taylor}.

This result establishes a direct theoretical link between the training objective and the downstream estimator. While a similar derivation for log-loss would result in minimizing the Kullback-Leibler divergence, our approach minimizes the expected tailored divergence $d_\ell$. Since $d_\ell$ is constructed to match the local geometry of $d_{\text{task}}$ (Eq.~\ref{eq:task_taylor}), minimizing the training loss $\ell$ effectively minimizes the second-order upper bound on the downstream error $\mathcal{E}(\theta, \hat{\theta})$. 

More formally, $d_\ell$ acts as the \textit{natural convex surrogate} of the task error. By enforcing $w_\ell = w_{\text{task}}$, we ensure that the loss landscape \textit{locally} penalizes prediction errors exactly in proportion to their impact on the final estimation, while maintaining the \textit{global} convexity of the loss required for stable training.

Finally, to ensure stable gradient dynamics, the derived proper scoring rule $\ell$ must be paired with its canonical probability mapping $\sigma_\ell$ defined by $(\sigma^{-1}_\ell)'=w_\ell$. This necessity will be exemplified in Section~\ref{sec:link}

Equipped with this general framework for deriving both the loss and its canonical probability mapping, we now proceed to apply it to a concrete challenge: ATE estimation via Inverse Probability Weighting in causal inference.
\subsection{Application to Causal Inference}
\label{sec:causal_app}

To demonstrate the utility of our general framework, we apply it to the estimation of the ATE using the IPW estimator described in Section~\ref{sec:background_causal}.

\subsubsection{Mapping ATE estimation to the General Framework}

We treat the propensity score $e(x)$ as the conditional probability $p(x)$ from our general framework. To derive the IPW-specific loss, we map the causal objects directly to the notations introduced in Section~\ref{sec:general_framework}:

\begin{tcolorbox}[colback=gray!10, colframe=gray!50, arc=2pt, boxrule=1pt, left=4pt, right=4pt, top=4pt, bottom=4pt]
\begin{itemize}[leftmargin=*, noitemsep, topsep=0pt]
    \item \textbf{True Parameter ($\theta$):} The Average Treatment Effect:
    \begin{equation*}
        \theta \coloneqq \tau_{\mathrm{ATE}} = \mathbb{E}[Y(1) - Y(0)]
    \end{equation*}    
    \item \textbf{Downstream Estimator ($\hat{\theta}$):} The Inverse Probability Weighting estimator:
    \begin{equation*}
        \hat{\theta} \coloneqq \hat{\tau}_{\mathrm{ATE}} = \frac{1}{N}\sum_{i=1}^N \left( \frac{Y_i T_i}{\hat{e}(X_i)} - \frac{Y_i (1-T_i)}{1-\hat{e}(X_i)} \right)
    \end{equation*}    
    \item \textbf{Error Metric ($\mathcal{E}$):} The Mean Squared Error (MSE) of the estimator:
    \begin{align*}
        \mathcal{E}(\theta, \hat{\theta}) &\coloneqq \mathbb{E}\left[ (\hat{\theta} - \theta)^2 \right] \\
        &= \underbrace{(\mathbb{E}[\hat{\theta}] - \theta)^2}_{\mathrm{Bias}(\hat{\theta})^2} + \underbrace{\mathbb{E}[(\hat{\theta} - \mathbb{E}[\hat{\theta}])^2]}_{\mathrm{Var}(\hat{\theta})}
    \end{align*} 
\end{itemize}
\end{tcolorbox}

Our goal is to find the specific divergence $d_{\text{task}}$ that satisfies Assumption~\ref{assum:bounding} to get an upper bound on this error metric.
\subsubsection{Deriving the Task-Specific Divergence}

By analyzing the bias and variance components separately, we derive a tractable upper bound on the total downstream error:

\begin{tcolorbox}[colback=gray!10, colframe=gray!50, arc=2pt, boxrule=1pt, left=4pt, right=4pt, top=4pt, bottom=4pt]
\begin{theorem}[Upper Bound on Downstream MSE]
\label{thm:mse_bound}

Assume the following conditions hold:
\begin{itemize}[leftmargin=3ex, topsep=2pt, itemsep=0ex]
    \item \textbf{Causal Identifiability:} Consistency, unconfoundedness, and strict positivity hold.
    \item \textbf{Bounded Variance:} $\mathbb{E}[Y(t)^2 \mid X] \leq M^2$ almost surely for $t \in \{0,1\}$ and some $M < \infty$.
    \item \textbf{Cross-Fitting:} Propensity scores $\hat{e}(X)$ are estimated via cross-fitting, allowing $\hat{e}$ to be treated as a fixed, out-of-sample function.
\end{itemize}

Then, the MSE of the plug-in IPW estimator is upper-bounded as:
\begin{equation}
    \mathcal{E}(\theta, \hat{\theta}) \leq C_0 + C_1 \cdot \mathbb{E}\left[ d_{\text{task}}(e(X), \hat{e}(X)) \right]
\end{equation}
where $C_0, C_1$ are constants independent of $\hat{e}$. The task divergence $d_{\text{task}}(p,q) = d_{\text{bias}}(p,q) + d_{\text{var}}(p,q)$ is defined by:
\begin{align*}
    d_{\text{bias}}(p,q) &= \left(\frac{p}{q}-1\right)^2 \hspace*{-0.1cm}+ \left(\frac{1-p}{1-q}-1\right)^2 \\
    d_{\text{var}}(p,q) &= p\left(\frac{1}{q}-\frac{1}{p}\right)^2 \hspace*{-0.1cm}+ (1-p)\left(\frac{1}{1-q}-\frac{1}{1-p}\right)^2
\end{align*}
\end{theorem}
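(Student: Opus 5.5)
We bound the MSE through its bias--variance decomposition and treat the two pieces separately. Cross-fitting lets us regard $\hat e$ as a fixed function, so the summands $Z_i = \frac{Y_iT_i}{\hat e(X_i)} - \frac{Y_i(1-T_i)}{1-\hat e(X_i)}$ are i.i.d. Hence $\mathbb{E}[\hat\theta]=\mathbb{E}[Z]$ and $\mathrm{Var}(\hat\theta)=\mathrm{Var}(Z)/N\le \mathbb{E}[Z^2]/N$. Throughout we write $\mu_t(x)=\mathbb{E}[Y(t)\mid X=x]$ and $e=e(X)$, $\hat e=\hat e(X)$.

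\textbf{Bias term.} By consistency and unconfoundedness, $\mathbb{E}[YT\mid X]=e\,\mu_1(X)$, which gives
\begin{equation*}
\mathbb{E}[\hat\theta]-\theta=\mathbb{E}\Big[\mu_1(X)\big(\tfrac{e}{\hat e}-1\big)-\mu_0(X)\big(\tfrac{1-e}{1-\hat e}-1\big)\Big].
\end{equation*}
We square this and apply $(a-b)^2\le 2a^2+2b^2$ together with Jensen (or Cauchy--Schwarz). Using $\mu_t(X)^2\le \mathbb{E}[Y(t)^2\mid X]\le M^2$, this yields $\mathrm{Bias}^2\le 2M^2\,\mathbb{E}[d_{\text{bias}}(e,\hat e)]$.

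\textbf{Variance term.} Since $T(1-T)=0$, the cross term vanishes and $\mathbb{E}[Z^2\mid X]=\frac{e\,\mathbb{E}[Y(1)^2\mid X]}{\hat e^2}+\frac{(1-e)\,\mathbb{E}[Y(0)^2\mid X]}{(1-\hat e)^2}\le M^2\big(\frac{e}{\hat e^2}+\frac{1-e}{(1-\hat e)^2}\big)$. To bring in $d_{\text{var}}$, we write $\frac{1}{\hat e}=(\frac1{\hat e}-\frac1e)+\frac1e$ and use $(a+b)^2\le 2a^2+2b^2$:
\begin{equation*}
\frac{e}{\hat e^2}\le 2e\Big(\frac1{\hat e}-\frac1e\Big)^2+\frac{2}{e}.
\end{equation*}
The control arm is handled symmetrically. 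Strict positivity, say $e\in[\eta,1-\eta]$, gives $\mathbb{E}[1/e+1/(1-e)]\le 2/\eta$, a constant independent of $\hat e$. Therefore $\mathrm{Var}(\hat\theta)\le \frac{2M^2}{N}\mathbb{E}[d_{\text{var}}(e,\hat e)]+\frac{4M^2}{N\eta}$.

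\textbf{Combining.} Adding the two bounds gives
\begin{equation*}
C_0=\frac{4M^2}{N\eta},\qquad C_1=2M^2\max(1,1/N)=2M^2,
\end{equation*}
and summing the two divergence terms yields $\mathbb{E}[d_{\text{task}}]$.

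The main obstacle is the variance step. One must avoid the tempting but incorrect move of bounding $\mathbb{E}[Z^2]$ directly by $d_{\text{var}}$. The decomposition around $1/e$ is essential, since it leaves a $\hat e$-free remainder that is absorbed into $C_0$. One must also check the edge cases where $\hat e$ approaches $0$ or $1$; there both sides may be infinite, and the inequality holds trivially.
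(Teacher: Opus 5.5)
Your proof is correct and follows essentially the same route as the paper. Both arguments use the same bias formula bounded via Cauchy--Schwarz/Jensen, and for the variance the same split of $1/\hat e$ around $1/e$, with the cross term removed by $T(1-T)=0$. The only cosmetic difference is in the constant $C_0$: the paper writes $Z=A+B$, uses $\mathrm{Var}(A+B)\le 2\mathrm{Var}(A)+2\mathrm{Var}(B)$ and keeps $C_0=\tfrac{2}{N}\mathrm{Var}(A)$ as the oracle variance, whereas you bound $\mathbb{E}[Z^2]$ pointwise and make $C_0$ explicit through the overlap constant $\eta$.
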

\end{tcolorbox}
\textbf{Proof Sketch.} The proof relies on the bias-variance decomposition of the MSE.\\
\textit{(i) Bias Term:} The bias of the treated term is $\mathbb{E}[\mu_1(X)(\frac{e}{\hat{e}} -~1)]$, where $\mu_1(X) \coloneqq \mathbb{E}[Y(1) \mid X]$ denotes the conditional expected potential outcome. Applying the Cauchy-Schwarz inequality allows us to separate the outcome magnitude from the propensity error, yielding an upper bound proportional to $\mathbb{E}[(\frac{e}{\hat{e}}-1)^2]$, which constitutes $d_{\text{bias}}$.\\
\textit{(ii) Variance Term:} The variance of the IPW estimator scales with the expected squared error of the inverse-weighted outcomes. Specifically, the error contribution from the treated term behaves as $\mathbb{E}[Y(1)^2 e (\frac{1}{\hat{e}} - \frac{1}{e})^2]$. Bounding the conditional outcome moments by $M^2$ allows us to extract the term $d_{\text{var}}$. The constants $C_0$ and $C_1$ absorb the irreducible oracle variance and the outcome magnitude bounds. (See Appendix~\ref{App:proofs} for the complete proof).

The bias component $d_{\text{bias}}$ explodes quadratically ($1/q^2$) as $q \to 0$, while the variance component $d_{\text{var}}$ introduces a cubic penalty ($1/q^3$), reflecting both the extreme sensitivity of IPW bias and variance to small propensity scores.

\begin{remark}[Extension to Robust Estimators]
\label{rem:robust_extension}
While Theorem~\ref{thm:mse_bound} is, for clarity, restricted to the standard IPW estimator, we show in Appendix~\ref{App:proofs_aipw} that the MSE upper bounds for the Hajek \citep{hajek1971comment} and Augmented Inverse Probability Weighting (AIPW) \citep{aipw} estimators share the same mathematical structure. Specifically, the MSE error is driven by the exact same task-specific divergence $d_{\text{task}}$ (differing only by scaling constants). Consequently, the tailored loss derived here remains identical for these robust estimators.
\end{remark}
\subsubsection{Matching Local Curvature}
As explained in Section \ref{sec:general_framework}, we approximate $d_{\text{task}}$ locally using the curvature matching framework. The total task curvature $w_{\text{task}}(p)$ is derived by summing the second derivatives of the bias and variance terms which yields:
\begin{equation}
\label{eq:curvature_sum}
    w_{\text{task}}(p) = \underbrace{\left( \frac{2}{p^2} + \frac{2}{(1-p)^2} \right)}_{\text{Curvature from } d_{\text{bias}}} 
    \hspace{-0.08cm}+ \hspace{-0.08cm}\underbrace{\left( \frac{2}{p^3} + \frac{2}{(1-p)^3} \right)}_{\text{Curvature from } d_{\text{var}}}
\end{equation}
Setting $w_\ell=w_{\text{task}}$ and integrating this weight function via Prop~\ref{prop:task_psr} (See Appendix~\ref{App:scoring_rules}) yields our tailored objective.

\begin{tcolorbox}[colback=gray!10, colframe=gray!50, arc=2pt, boxrule=1pt, left=4pt, right=4pt, top=4pt, bottom=4pt]
\begin{definition}[IPW-Tailored Proper Scoring Rule]
\label{def:ipw_psr}
The strictly proper scoring rule $\ell$ that locally minimizes the upper bound on the IPW MSE is defined as:
\begin{align*}
    \ell(t,q) &=  t \left[\frac{1}{q^2}-\frac{2}{1-q}+2\log\big(q(1-q)\big)\right] \\
    &\hspace{-0.1cm}+  (1-t) \left[\frac{1}{(1-q)^2}-\frac{2}{q}+2\log\big(q(1-q)\big)\right]
\end{align*}
\end{definition}
\end{tcolorbox}

\begin{figure}[!t]
    \centering
    \includegraphics[width=\linewidth]{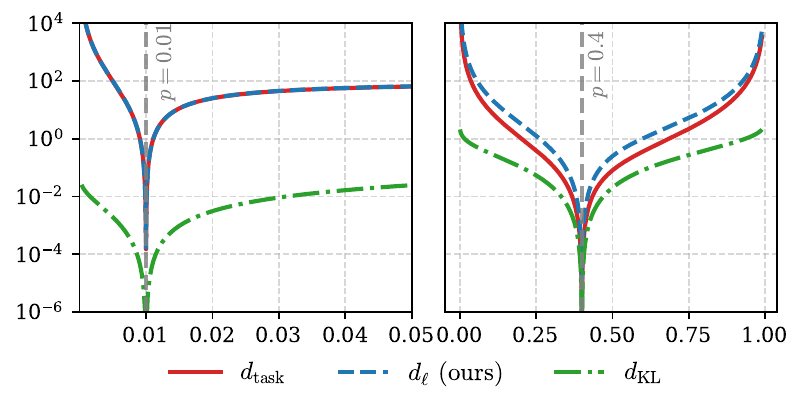}
    \caption{\textbf{$d_\ell$ locally behaves as $d_{\text{task}}$}. Log-scale plots of $q \mapsto~d(p,q)$ for $d \in \{d_{\text{task}}, d_\ell, d_{\mathrm{KL}}\}$. The vertical dashed lines indicate the true probability $p$, with $p=0.01$ in the left panel and $p=0.4$ in the right panel.}
    \label{fig:div_plot}
\end{figure}

Figure~\ref{fig:div_plot} illustrates the tight alignment between the induced divergence $d_{\ell}$ and the theoretical bound $d_{\text{task}}$. The match is blatant near the boundaries, where the standard KL divergence largely underestimates the error magnitude.

\subsubsection{Deriving the IPW Canonical Link}
\label{sec:link}

The derivation of the loss $\ell$ is only half the solution; training a machine learning model with such a loss necessitates careful design of the final layer activation function.

As derived in Eq.~\ref{eq:curvature_sum}, the task-specific weight $w_{\ell}(p)$ scales with $1/p^3$ and $1/(1-p)^3$ when approaching $0$ and $1$. If we were to naively pair this loss with a standard Sigmoid activation $\sigma$, the gradient with respect to the logit $z$ would be:
\begin{equation}
    \frac{\partial \ell}{\partial z} = w_{\ell}(p)(p-y) \cdot \underset{= p(1-p)}{\underbrace{\sigma'(z)}}.
\end{equation}
This leads to numerical instability because as predictions approach $0$, the magnitude of the weight function ($\approx 1/p^3$) dominates the magnitude of the sigmoid derivative ($\approx p$). This results in gradients that scale as $1/p^2$. A similar reasoning applies when $p\to 1$, leading us to conclude that gradients explode precisely in the regions where accurate estimation is most critical. This is empirically verified in Appendix~\ref{app:necessity}.

To resolve this, we employ the strategy established in Section~\ref{sec:theoretical_guarantee}: we pair the tailored scoring rule $\ell$ with its corresponding canonical probability mapping $\sigma_\ell$. Defined by the differential equation $(\sigma^{-1}_\ell)'(p) = w_\ell(p)$, this activation ensures that the mapping derivative exactly cancels the weight function, reducing the gradient to a stable linear residual $\frac{\partial \ell}{\partial z} = p - y$, which cannot explode or vanish.
In our case, solving this differential equation (and setting the integration constant such that $\sigma_\ell(0)=0.5$) corresponds for $z\in\mathbb{R}$, finding $p$ such that:
\begin{equation}
\label{eq:proba_mapping}
    z = \int \left( \frac{2}{p^2} + \frac{2}{(1-p)^2} + \frac{2}{p^3} + \frac{2}{(1-p)^3} \right) \, dp.
\end{equation}
Integration is immediate but inverting Equation~\ref{eq:proba_mapping} requires solving a polynomial equation, which, in our specific case, reduces to finding the largest real root of the following depressed quartic equation (See Appendix~\ref{app:canonical_mapping} for full derivation) for an auxiliary variable $u=\frac{1}{p(1-p)}$:
\begin{equation}
\label{eq:quartic_eq}
    u^4 - 12u^2 - 16u - z^2 = 0.
\end{equation}
This consequently yields the following mapping:

\begin{tcolorbox}[colback=gray!10, colframe=gray!50, arc=2pt, boxrule=1pt, left=4pt, right=4pt, top=4pt, bottom=4pt]
\begin{definition}[Canonical probability mapping]
\label{def:can_link}
Let $z\in\mathbb{R}$ and $u_0(z)$ be the largest root of Equation~\ref{eq:quartic_eq}. The canonical link associated with $\ell$ is defined as: 
\begin{equation}
    \sigma_{\ell}(z) = \frac{1}{2}\left(1+\mathrm{sign}(z)\sqrt{1-\frac{4}{u_0(z)}}\right)
\end{equation}
\end{definition}
\end{tcolorbox}
\begin{remark}[Deep Learning Implementation]
\label{rem:implementation}
While the forward pass requires evaluating the closed-form root of a quartic equation, the backward pass can be optimized. Because $\ell$ and $\sigma_\ell$ form a canonical pair, the gradient w.r.t. the logits simplifies to $p - y$. For deep learning models, practitioners can either rely on standard automatic differentiation or implement a custom backward pass that directly applies this simple gradient, bypassing the need to differentiate through the root-finding routine. We detail these implementation choices in Appendix~\ref{sec:implementation_details} and computation trade-off in Appendix~\ref{sec:computational_benchmarks}.
\end{remark} 
\begin{remark}[Bias-Variance Trade-off in Link Functions]
\label{rem:bias_variance}
We note that excluding the variance terms ($1/p^3$) from the weight function simplifies the inversion to a quadratic equation, yielding a lighter ``Bias-Only'' loss. While computationally simpler, our experiments suggest full MSE loss provides superior results (See Appendix \ref{app:bias_only_psr} for details).
\end{remark}

Equipped with our tailored proper scoring rule and its canonical probability mapping, we now evaluate the framework against standard propensity estimators on established causal inference benchmarks.

\section{Experiments}
\label{sec:experiments}

A fundamental challenge in validating causal inference methodologies is the absence of ground truth counterfactuals in real-world data. To rigorously assess the effectiveness of our proposed tailored scoring rule, we follow the standard benchmarking protocol in the literature by relying on semi-synthetic datasets which combine real-world covariates with simulated outcomes. This preserves realistic feature correlations while ensuring that the true ATE is known.

Our experimental evaluation is structured in two parts: first, we compare our method against specialized balancing or post-hoc modification baselines on low-dimensional benchmarks using linear backbones. Second, we evaluate the scalability and effectiveness of our loss as a drop-in replacement for log-loss in higher-dimensional settings (ACIC 2017). 

We emphasize here that our loss-mapping combination is inherently architecture-agnostic, acting as a drop-in replacement for log-loss in any gradient-based learner.
\subsection{Standard Benchmarks: Comparisons with Balancing Methods}
\begin{figure*}[!t]
    \centering
    \includegraphics[width=\linewidth]{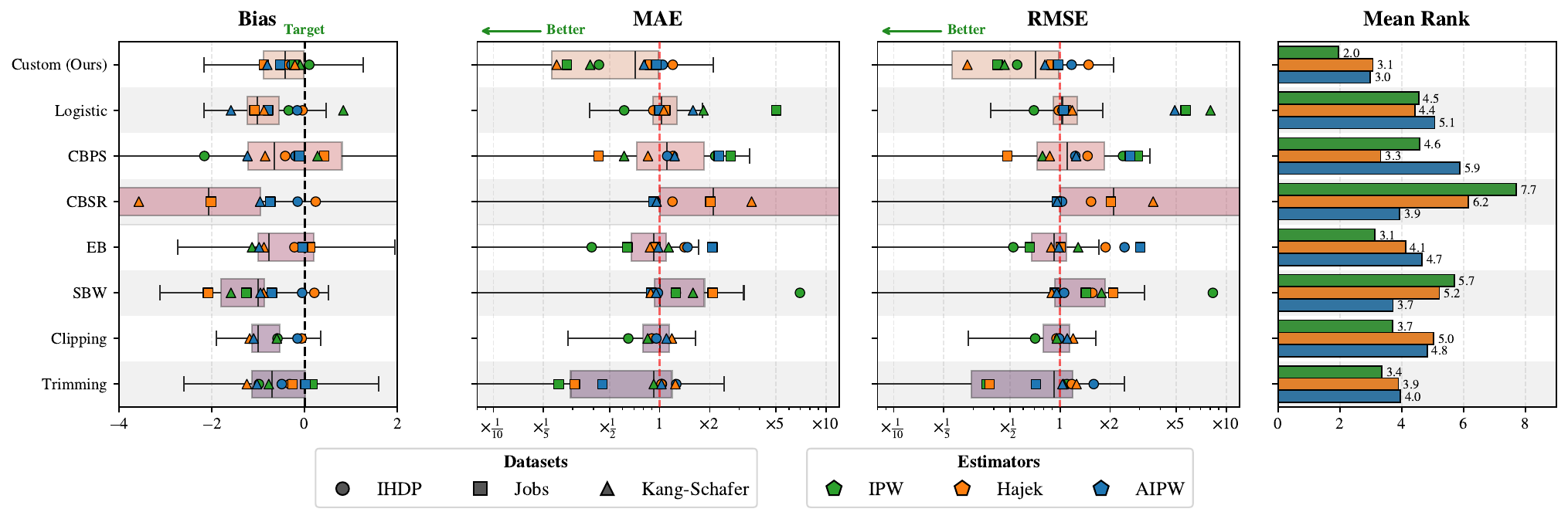}
    \caption{\textbf{Comparative performance analysis of treatment effect estimation methods across multiple benchmarks}. All metrics are  standardized by dividing the error of each method by the median absolute error of all methods within a specific simulation and estimator configuration. The left panels illustrate the distribution of Normalized Bias, MAE, and RMSE across all simulation runs; the objective is to center results on the black dotted line for Bias ($0.0$) and to achieve the lowest possible values for MAE and RMSE. Logarithmic scales for MAE and RMSE use a red dashed line at $1.0$ to denote median baseline performance, with x-axis ticks indicating multiplicative error factors. Overlaid scatter points distinguish performance across datasets (IHDP: $\circ$, Jobs: $\square$, Kang-Schafer: $\triangle$) and estimators (IPW: green, Hajek: orange, AIPW: blue). The rightmost panel shows the mean rank of each method, calculated for each simulation and averaged across a given dataset and for a given estimator, where lower values indicate more consistent superior performance across diverse dataset. }
    \label{fig:boxplot}
\end{figure*}
\textbf{Datasets.} We evaluate our method on three semi-synthetic benchmarks: \textbf{IHDP}~\citep{ihdp}, \textbf{Jobs}~\citep{jobs, jobs_bis} and \textbf{Kang \& Schafer}~\citep{kang2007demystifying}.

\textbf{Baselines.} We compare our Tailored IPW Loss $\ell$ paired with its canonical probability mapping $\sigma_\ell$ against three distinct categories of propensity score estimators:
\begin{enumerate}[leftmargin=2ex,topsep=0pt, itemsep=0ex]
    \item \textbf{Likelihood-based:} Standard Logistic Regression trained with log-loss (MLE).
    \item \textbf{Heuristics:} MLE refined by post-hoc \textbf{Trimming} or \textbf{Clipping}.
    \item \textbf{Balancing Methods:} Estimators that directly target sample covariate balance, including \textbf{CBPS}~\citep{ImaiRatkovic2014}, \textbf{CBSR}~\citep{zhao_cbsr}, Entropy Balancing~\citep{Hainmueller_2012}, and \textbf{SBW}~\citep{Zubizarreta03072015}.
\end{enumerate}
Details about baselines implementations can be found in Appendix~\ref{sec:baselines}.

\textbf{Protocol.} For these low-dimensional benchmarks, we restrict all propensity estimators to a linear specification. We evaluate downstream performance using \textbf{IPW}, \textbf{Hajek}~\citep{hajek1971comment}, and \textbf{AIPW}~\citep{aipw} estimators, implementing 10-fold cross-fitting (see Appendix~\ref{sec:implementation_details} for details). Each benchmark consists of multiple independent simulation runs (ranging from $10$ to $2000$), providing distinct realizations of treatment assignments and outcomes.

\textbf{Metrics.} Our primary evaluation metric is the Root Mean Squared Error (RMSE) of the ATE, computed across all simulation runs. As the square root of our targeted MSE, this metric directly reflects the objective optimized by our tailored scoring rule. In accordance with standard practices, we also report the Mean Absolute Error (MAE) and the average bias to provide a comprehensive assessment of estimator performance.

\textbf{Results.} Figure~\ref{fig:boxplot} summarizes the macro-average distribution of standardized errors (standardized vs. the median estimator) errors across all simulation runs (exact methodology is described in Appendix~\ref{sec:detailed_experiments}). The visualization confirms that tailoring the scoring rule to the downstream task yields consistent performance gains.
\begin{itemize}[leftmargin=2ex,topsep=0pt, itemsep=0ex]
\item \textbf{MAE \& RMSE:} While likelihood-based methods (Logistic) and balancing approaches (CBPS, CBSR) can be inconsistent, our tailored loss $\ell$ demonstrates superior performance, especially for the vanilla IPW estimator. It achieves the lowest standardized MAE and RMSE by shifting the error distribution to the left.
\item \textbf{Bias Control:} In the Bias panel, while our method appears to be slightly biased downwards, the average relative bias of our method is the closest to the Target ($0.0$).
\item \textbf{Rankings:} The Mean Rank panel provides a holistic view of results. Our method secures the lowest (best) rank across all three downstream estimators. While post-hoc heuristics like Trimming can be competitive in specific MAE or RMSE realizations, the ranking confirms that our approach is the most consistent top-performer across all the datasets.
\end{itemize}

Notably, the gains are largest for the vanilla IPW estimator, which lacks the inherent normalization of Hajek estimator or the protection of the doubly-robust AIPW. This confirms our central hypothesis: by internalizing the geometric sensitivity of the IPW objective, our loss function prioritizes accurate estimation exactly where the estimator is most vulnerable \textit{i.e.} at the boundaries of the simplex.

Next, we shift toward more recent and complex dataset : ACIC data challenges.

\subsection{ACIC 2017}

To evaluate performance in higher-dimensional regimes, we rely on the ACIC 2017 Data Challenge~\citep{acic2017}. This benchmark combines real-world covariates ($N=4802$ observations, $d=58$ dimensions) with $32$ distinct data-generating processes (DGPs). These DGPs are specifically designed to stress-test estimators against varying degrees of non-linearity and, crucially, varying selection strength—\textit{i.e.}, the extent to which true propensity scores are concentrated near the boundaries ($0$ and $1$), where estimation errors have the highest impact on downstream ATE accuracy.  
\textbf{Setup.} Rather than an extensive benchmark against specialized state-of-the-art architectures~\citep{kunzel2019metalearners,wager2018estimation, hahn2020bayesian}, we assess the practical utility of our framework as a drop-in replacement for the standard log-loss within two model architectures: a Multi-Layer Perceptron (MLP) and Gradient Boosting (XGBoost~\citep{chen2016xgboost}).
We selected these models to cover the two dominant paradigms in tabular learning (neural vs. tree-based).
This design highlights a key practical advantage: while adapting covariate balancing objectives to complex backbones  is non-trivial, our method requires only a simple loss replacement.
For each architecture, we therefore isolate the impact of the loss function by comparing the downstream ATE estimation error under two optimization regimes:

\begin{enumerate}[label=\arabic*., leftmargin=*]
    \item \textbf{Standard Training:} Minimizing the log-loss
    \item \textbf{Tailored Training:} Minimizing the proposed MSE-aware proper scoring rule coupled with its canonical probability mapping.
\end{enumerate}

\begin{figure}[t]
    \centering
    \includegraphics[width=\linewidth]{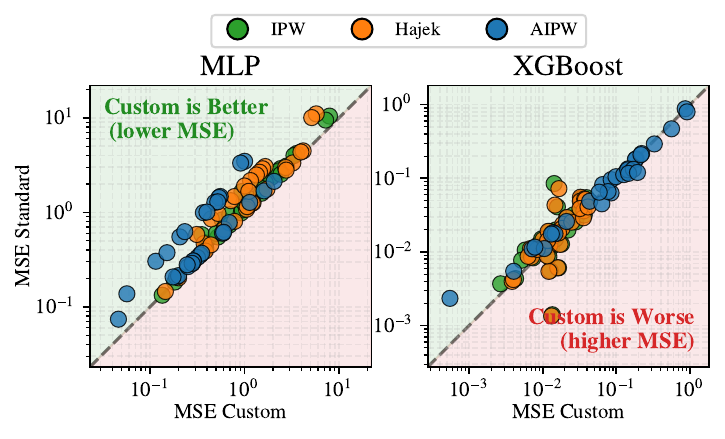}
    \caption{\textbf{Pairwise RMSE Comparison (ACIC 2017).} Each point represents the average RMSE for a specific estimator-setup pair ($32$ DGPs $\times$ $3$ estimators). Points in the upper-left (green) region indicate configurations where the proposed tailored loss yields lower RMSE than the standard log-loss baseline. The tailored objective consistently outperforms the baseline across IPW, Hajek, and AIPW estimators.}
    \label{fig:win_rate_acic_2017}
\end{figure}

\textbf{Global Performance.} We first quantify the benefit of replacing the standard log-loss with our tailored objective. Figure~\ref{fig:win_rate_acic_2017} presents a pairwise comparison of the RMSE for each combination of setup and downstream estimator ($96$ pairs in total). Points situated above the diagonal $y=x$ indicate scenarios where our tailored loss reduces the estimation error compared to the baseline.
The results demonstrate a consistent advantage: for the MLP backbone, our method improves performance in $95$ out of $96$ configurations. Similarly, for Gradient Boosting, our method yields superior estimates in $62$ out of $96$ cases.
\begin{figure}[t]
    \centering
    \includegraphics[width=\linewidth]{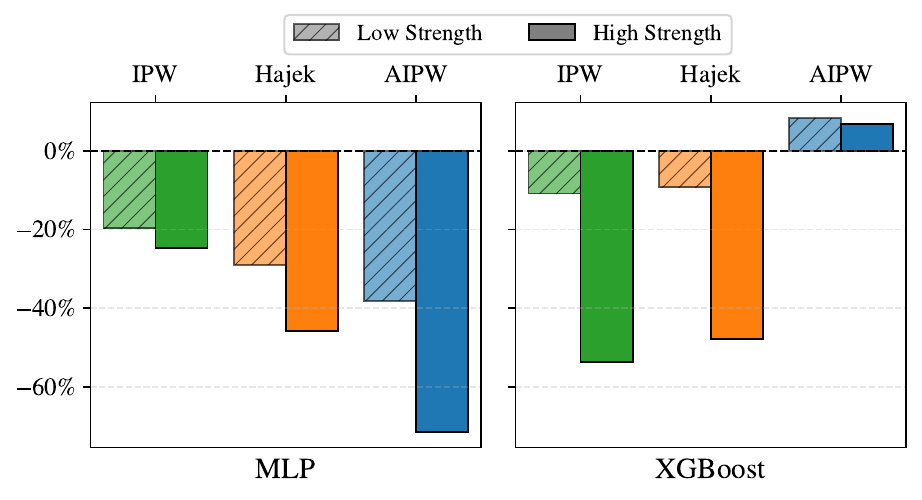}
    \caption{\textbf{Relative RMSE Improvement by Regime.} Relative RMSE improvement of the tailored loss compared to the log-loss baseline, stratified by selection strength intensity. The decrease in RMSE is most pronounced in high selection strength regimes, validating the theoretical motivation of tail-sensitive scoring rules.}
    \label{fig:selection_strength}
\end{figure}
\textbf{Impact of Selection Strength.} To elucidate the source of these improvements, we stratify the results based on the intensity of the selection strength. Half of the $32$ DGPs are characterized by strong selection~\citep{acic2017}, defined here as scenarios where true propensity scores are more concentrated near the boundaries ($0$ and $1$). As formalized in Section~\ref{sec:problem_formulation}, standard IPW estimators are most vulnerable in these regimes, as small predictive errors for these extreme probabilities can lead to massive bias and variance inflation.

Figure~\ref{fig:selection_strength} displays the relative RMSE improvement achieved by our method. We observe that the performance gains are higher in high-strength settings. While our tailored loss consistently outperforms the log-loss in low-selection regimes (except for XGBoost with AIPW), the gap widens when selection is strong. This confirms that our objective provides the most benefit in scenarios where downstream estimators are typically most vulnerable.

\section{Conclusion}

In this work, we addressed the fundamental mismatch between standard probabilistic training objectives and downstream estimation tasks. By formalizing a framework that aligns the local curvature of strictly proper scoring rules with the geometry of the downstream error, we derived a tailored loss that minimizes a local upper bound on the estimation error. Applied to the context of causal inference, our proposed IPW-tailored loss effectively penalizes the near-boundaries probabilities errors that drive bias and variance in ATE estimation.

A strength of our approach is its modularity. Unlike methods that require specialized architectures or complex optimization routines, our framework provides a \textit{plug-and-play} loss and probability mapping pair. This acts as a drop-in replacement for the standard log-loss, allowing practitioners to leverage the tailored objective across any model.

While we focused on ATE estimation via IPW, future work could include applying this framework to other causal estimands, such as the Average Treatment Effect on the Treated (ATT), the Conditional Average Treatment Effect (CATE), or to other robust estimators like Targeted Maximum Likelihood Estimation (TMLE)~\cite{van2011targeted}. We believe this work serves as a proof of concept and we hope these results encourage the community to adopt these tailored objectives that explicitly account for the downstream estimation task.
\section*{Limitations}
\label{sec:limitations}

While our framework demonstrates empirical and theoretical results, we acknowledge several limitations.

\textbf{Local Curvature Matching}. Our derivation relies on a local Taylor expansion of the task-specific error. While this provides a principled framework for aligning the loss geometry with the downstream objective near the true probability $p$, it does not offer formal global guarantees. Specifically, we do not have a theoretical guarantee that the loss remains the tightest possible bound in regions far from $p$. Nonetheless, our empirical results suggest that matching local curvature is a sufficient and effective heuristic to guide the model toward significantly more robust estimations in practice.

\textbf{Computational Trade-offs.} Replacing a standard, highly optimized objective like log-loss with a custom loss introduces a computational overhead. During the forward pass, our method requires solving a quartic equation, which is more expensive than the standard Sigmoid activation. While we mitigate this in deep learning architectures by using a simplified analytical backward pass ($p-y$), the total training time remains slightly higher than standard likelihood-based models. We provide a detailed quantification of these timing trade-offs across different backbones in Appendix~\ref{sec:computational_benchmarks}.

\textbf{Tractability of the Canonical Mapping.} A core strength of our approach is the derivation of the canonical probability mapping $\sigma_\ell$ to ensure stable optimization. However, a closed-form solution for this mapping is not guaranteed for every arbitrary task. For the IPW-ATE objective, this required solving a fourth-degree polynomial; for other downstream tasks, the resulting differential equations may not yield analytical inverses, potentially requiring practitioners to rely on numerical root-finding or approximation methods.

\section*{Impact Statement}
This paper presents work whose goal is to advance the field of machine learning. There are many potential societal consequences of our work, none of which we feel must be specifically highlighted here.

\bibliography{zbib}
\bibliographystyle{icml2026}

\newpage
\appendix
\onecolumn
\makeatletter
\renewcommand\addcontentsline[3]{%
  \addtocontents{#1}{\protect\contentsline{#2}{#3}{\thepage}{\@currentHref}}%
}
\makeatother

\counterwithin{figure}{section}
\counterwithin{table}{section}

\crefalias{section}{appendix}
\crefalias{subsection}{appendix}

\addtocontents{toc}{\protect\setcounter{tocdepth}{2}}

\renewcommand*\contentsname{\Large Appendix}

\tableofcontents
\clearpage

\section{Supplementary Related Work}
In this section, we supplement the discussion of prior works related to our paper.

\subsection{Downstream Task Adaptation}
Traditionally, predictive models maximize task-agnostic likelihood (e.g., log-loss), decoupling prediction from its final utility. However, optimal prediction often misaligns with downstream objectives due to differing bias-variance trade-offs \citep{fernandez2022causal}. This gap is bridged through downstream task adaptation. In classification, this includes \textit{post-hoc} cost-sensitive decoding \citep{elkan2001foundations, pmlr-v258-perez-lebel25a, plaud-etal-2024-revisiting, plaud2025to} or \textit{ante-hoc} structural modifications to the training objective \citep{NIPS2017_38db3aed}. The latter encompasses consistent surrogate bounds for non-decomposable metrics (e.g., F1, AUC) \citep{dembczynski2010exact, koyejo2014consistent, Eban2016ScalableLO}, and continuous relaxations like Fenchel-Young losses that embed structured geometries via generalized entropy \citep{blondel2020learning}.

Beyond classification, the ``predict-then-optimize'' framework \citep{bertsimas2020predict} spawned Decision-Focused Learning (DFL) \citep{donti2017task, mandi2024decision} and Smart Predict-then-Optimize \citep{Elmachtoub2022}, which aligns predictive and decision losses by backpropagating through differentiable solvers \citep{amos2017optnet}. While sharing DFL's conceptual goal, our causal framework operates without complex solvers or discrete decoders. Because Inverse Probability Weighting (IPW) is a closed-form plug-in formula, we achieve task-awareness directly by analytically bounding its Mean Squared Error (MSE) and embedding this continuous geometry into the upstream objective.

\subsection{Robust Causal Estimation Paradigms}
Beyond direct propensity modification, downstream causal robustness is traditionally achieved through doubly robust estimators like Augmented IPW \citep{aipw} or Targeted Maximum Likelihood Estimation (TMLE) \citep{van2011targeted}, alongside Double/Debiased Machine Learning (DML) \citep{chernozhukov2018double} to eliminate nuisance biases. Concurrently, deep causal architectures focus on learning regularized outcome representations \citep{jobs_bis, Hill2011-hb}. Recent advancements push this paradigm further via in-context learning; frameworks like Do-PFN \citep{robertson2026dopfn} leverage transformers pre-trained on massive synthetic causal datasets to perform zero-shot causal estimation on tabular data. Bridging representation learning and estimation, Dragonnet \citep{dragonnet} appends an asymptotic TMLE regularization term to standard predictive losses. Unlike methods relying on \textit{post-hoc} clipping \citep{crump2009dealing}, additive regularization, or zero-shot inference, our tailored proper scoring rule structurally redesigns the primary classification loss itself. Serving as a drop-in replacement for log-loss, it remains fundamentally complementary to advanced downstream frameworks like DML or TMLE.

\newpage
\section{Extended Analysis and Ablations}
\label{app:extended_analysis}

\subsection{The Necessity of the canonical probability mapping}
\label{app:necessity}

In Section~\ref{sec:link}, we argued that pairing the tailored scoring rule with its canonical probability mapping is theoretically required for optimization stability. To validate this empirically, we trained an (unregularized) Logistic regression with stochastic gradient descent on a simulation of the Kang-Schafer dataset using two configurations:
\begin{enumerate}
    \item \textbf{Mismatched:} The tailored loss $\ell$ paired with a standard Sigmoid activation $\sigma$.
    \item \textbf{Matched (Ours):} The tailored loss paired with the derived quartic canonical link $g_\ell$.
\end{enumerate}

\begin{figure}[h]
    \centering
    \includegraphics[width=0.5\linewidth]{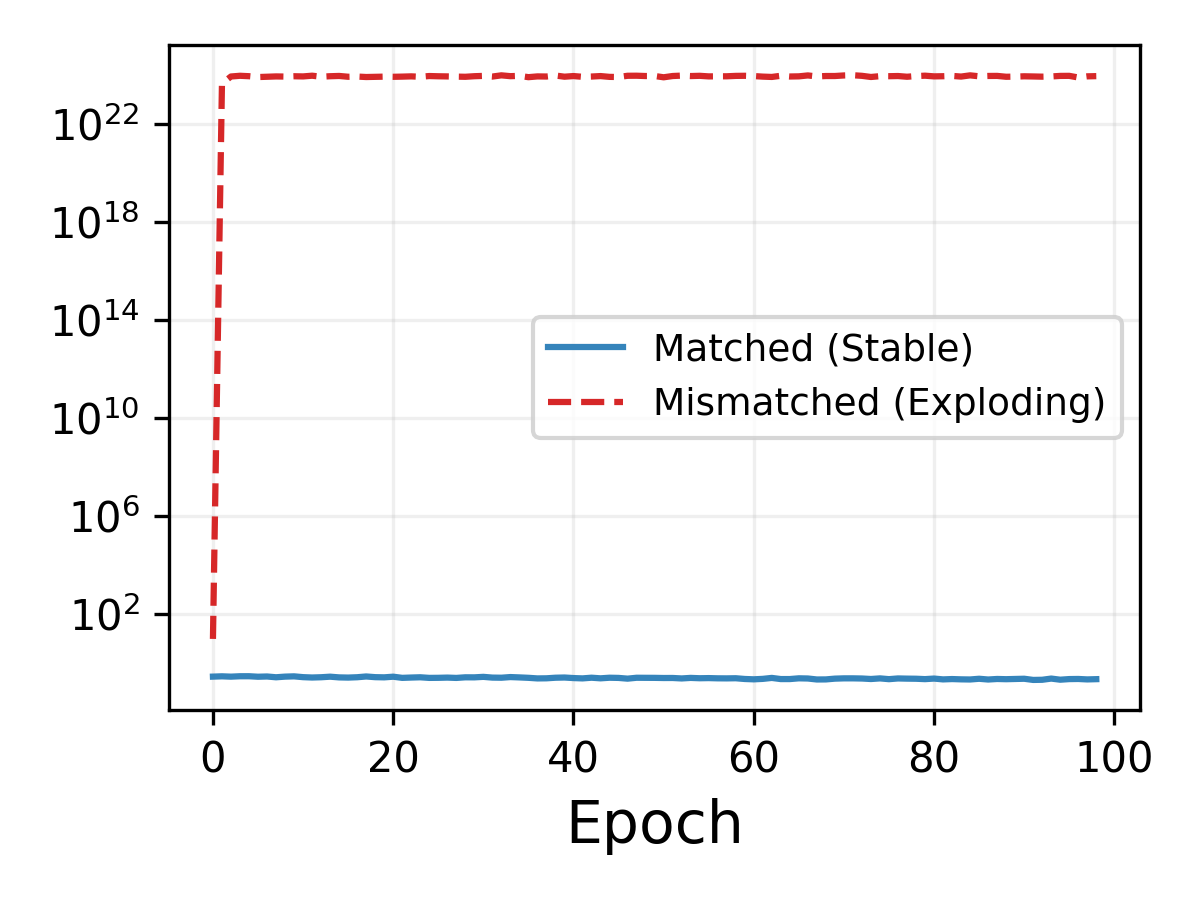}
\caption{\textbf{Optimization Stability Analysis.} Comparison of gradient norms during training. The \textbf{Mismatched} configuration (red) suffers from immediate catastrophic divergence, reaching norms of $10^{24}$. In contrast, the \textbf{Matched} canonical link (blue) maintains stable gradients ($< 0.28$) throughout training.}    \label{fig:gradient_norm}
\end{figure}

Figure~\ref{fig:gradient_norm} demonstrates the critical difference in stability. The \textbf{Mismatched} configuration suffers from catastrophic divergence almost immediately after initialization. The gradient norms explode to $10^{24}$ within the first few iterations, driven by the weight term $w_\ell(p) \cdot \sigma'(z)$, which scales as $O(1/p^2)$ near the boundaries. Conversely, the \textbf{Matched} configuration remains numerically stable throughout the entire training process, with gradient norms never exceeding $0.275$. This confirms that the canonical link successfully cancels the curvature divergence, ensuring bounded gradients even in the presence of extreme predictions.

\subsection{Ablation Study: Validating the Variance Penalty}

In Remark~\ref{rem:bias_variance}, we highlighted a structural trade-off between two tailored scoring rules derived via curvature matching:
\begin{enumerate}
    \item \textbf{Bias-Only Loss:} Derived by matching only the bias curvature ($1/p^2$). It benefits from a simpler quadratic canonical link but ignores the variance of the IPW weights.
    \item \textbf{Full MSE Loss (Ours):} Derived by matching the complete MSE bound. It requires a quartic canonical link but includes an additional $O(1/p^3)$ penalty term specifically targeting weight variance.
\end{enumerate}

Here, we empirically investigate whether the additional complexity of the Full MSE loss is justified. We compare both objectives across three benchmarks (IHDP, Jobs, Kang-Schafer) using three distinct estimators (IPW, Hajek, AIPW).

\begin{table*}[t]
\centering
\caption{\textbf{Impact of Variance Control.} RMSE of the ATE across three benchmarks (lower is better). While the standard Log-Loss and Bias-Only loss perform comparably in standard settings (IHDP, Jobs), the \textbf{Full MSE Loss} (Ours) provides better results in the Kang-Schafer environment.}
\label{tab:ablation_robustness}
\begin{small}
\begin{sc}
\resizebox{\textwidth}{!}{%
\begin{tabular}{lccccccccc}
\toprule
& \multicolumn{3}{c}{\textbf{IHDP}} & \multicolumn{3}{c}{\textbf{Jobs}} & \multicolumn{3}{c}{\textbf{Kang \& Schafer}} \\
\cmidrule(lr){2-4} \cmidrule(lr){5-7} \cmidrule(lr){8-10}
Objective & IPW & Hajek & AIPW & IPW & Hajek & AIPW & IPW & Hajek & AIPW \\
\midrule
Log-Loss (Baseline) & 0.51 & \textbf{0.16} & \textbf{0.19} & 0.46 & 0.08 & 0.09 & 66.90 & 8.12 & 46.51 \\
Bias-Only Loss & \textbf{0.27} & 0.25 & 0.21 & 0.04 & \textbf{0.05} & 0.08 & 3.39 & 2.62 & 5.77 \\
\textbf{Full MSE (Ours)} & 0.34 & 0.29 & 0.22 & \textbf{0.03} & 0.07 & \textbf{0.08} & \textbf{2.68} & \textbf{1.95} & \textbf{5.07} \\
\bottomrule
\end{tabular}
}
\end{sc}
\end{small}
\end{table*}

Table~\ref{tab:ablation_robustness} demonstrates that while all methods maintain comparable performance on standard benchmarks (IHDP and Jobs), the choice of loss becomes critical in difficult regimes. In the Kang-Schafer simulation—characterized by high lack of overlap—the Log-Loss suffers from catastrophic divergence (RMSE $\approx 66.9$). The Full MSE loss perform better than the Bias-only loss.

\subsection{Visualizing the Canonical Link}

To understand how the proposed method achieves robustness, we visualize the canonical link function $g_\ell(z)$ as well as the mapping associated with the Bias-only loss and compared them to the standard Sigmoid.

\begin{figure}[h]
    \centering
    \includegraphics[width=0.5\linewidth]{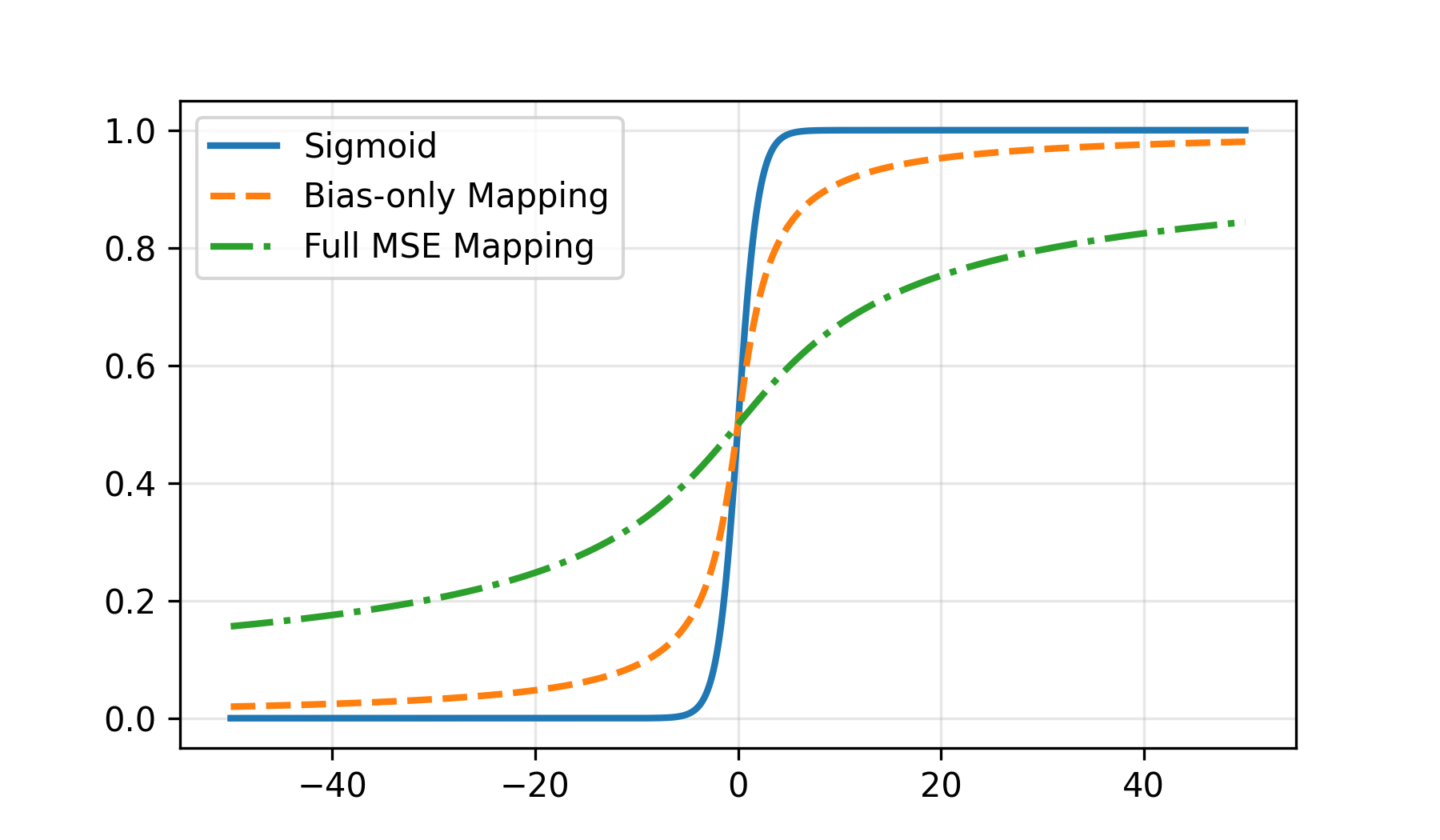}
    \caption{\textbf{Comparison of Canonical Link Functions.} We visualize the mapping $z = g(p)$ for the standard Logit (blue), the Bias-Only quadratic link (orange), and the proposed Full MSE quartic link (green). Note that the Full MSE link grows significantly faster near the boundaries ($p \to 0, 1$).}    
    \label{fig:link_viz}
\end{figure}
As illustrated in Figure~\ref{fig:link_viz}, our canonical link function approaches the asymptotes ($0$ and $1$) much more slowly than the Sigmoid function. For a large logit input $z=10$, Sigmoid yields $p \approx 0.99995$, whereas our link yields $p \approx 0.7$. This ``stretching'' of the probability space prevents the model from assigning extreme probabilities.
\subsection{Computational Overhead}
\label{sec:computational_benchmarks}

As discussed in Section~\ref{sec:limitations}, replacing a highly optimized standard objective like the log-loss (Binary Cross-Entropy) with our tailored proper scoring rule introduces a computational overhead. Specifically, our canonical probability mapping requires computing the roots of a quartic equation, which is mathematically more expensive than the standard Sigmoid function. 

To precisely quantify this computational trade-off and evaluate the efficacy of our mitigation strategies, we designed a targeted training throughput benchmark.

\textbf{Experimental Setup.} We evaluate the models on a large-scale synthetic classification dataset ($N = 10^6$ samples, $d = 100$ features) to ensure that the timing differences are statistically significant and reflect the algorithmic behavior rather than system overhead. All models are trained for a fixed number of iterations without validation-based early stopping. 

\textbf{Configurations Evaluated.} 
For the Multi-Layer Perceptron (MLP) backbone (3 hidden layers of 100 neurons), we compare:
\begin{enumerate}[leftmargin=*, topsep=0pt, itemsep=0ex]
    \item \textbf{Baseline MLP:} Standard Log-Loss paired with a Sigmoid activation.
    \item \textbf{Naive Custom MLP:} Our tailored loss and quartic mapping, using standard automatic differentiation (Autograd).
    \item \textbf{Optimized Custom MLP (Ours):} Our tailored loss utilizing the explicit custom backward pass ($\frac{\partial \ell}{\partial z} = p - y$), bypassing the need to backpropagate through the quartic solver.
\end{enumerate}

For the Gradient Boosting (XGBoost) backbone, we compare three configurations to isolate mathematical complexity from software overhead:
\begin{enumerate}[leftmargin=*, topsep=0pt, itemsep=0ex]
    \item \textbf{Native Baseline:} The highly optimized \texttt{binary:logistic} objective.
    \item \textbf{Python Logistic:} A custom objective executing standard logistic gradients via a Python callback, serving as a control for the software 'callback tax'.
    \item \textbf{Custom Quartic (Ours):} Our tailored loss evaluating the quartic canonical inverse to supply the analytical gradient ($g$) and Hessian ($h$).
\end{enumerate}

\textbf{Results and Discussion.} 
The results of the benchmark are summarized in Table~\ref{tab:throughput_benchmark}. 

\begin{table}[h]
\centering
\caption{\textbf{Propsensity score Training Throughput Benchmark.} Relative training time across configurations on a dataset of $N=10^6, d=100$. XGBoost control rows isolate mathematical overhead from Python callback latency.}
\label{tab:throughput_benchmark}
\begin{small}
\begin{sc}
\begin{tabular}{lllc r}
\toprule
\textbf{Backbone} & \textbf{Objective Math} & \textbf{Evaluation Engine} & \textbf{Total Time} & \textbf{Relative Time} \\
\midrule
MLP (Baseline) & Log-Loss (Sigmoid) & PyTorch Native & $163.26$s & $1.00\times$ \\
MLP (Naive Custom) & Quartic Root & Autograd & $185.40$s & $1.14\times$ \\
MLP (Optimized Custom) & Quartic Root & Analytical ($p - y$) & $166.79$s & $\mathbf{1.02\times}$ \\
\midrule
XGBoost (Native Baseline) & Log-Loss (Sigmoid) & Native C++ & $7.25$s & $1.00\times$ \\
XGBoost (Python Logistic) & Log-Loss (Sigmoid) & Python Callback & $8.16$s & $1.12\times$ \\
XGBoost (Custom Quartic) & Quartic Root & Python Callback & $36.50$s & $5.03\times$ \\
\bottomrule
\end{tabular}
\end{sc}
\end{small}
\end{table}

The benchmark reveals three key insights regarding the practical implementation of our framework:
\begin{itemize}[leftmargin=*, topsep=0pt, itemsep=0ex]
    \item \textbf{The Autograd Overhead:} The \textit{Naive Custom MLP} incurs a noticeable performance penalty (a $14\%$ increase in training time). This confirms that while our loss is fully differentiable, forcing the autograd engine to step through the quartic solver is inefficient.
    \item \textbf{Canonical Link Efficiency in PyTorch:} By implementing the analytical backward pass, the \textit{Optimized Custom MLP} recovers almost all of this performance. Because the model computes the probabilities $p$ efficiently during the forward pass and simply reuses them for the $p-y$ gradient, the remaining mathematical overhead is small (approximately $2\%$).
    \item \textbf{Tree-Based Mathematical Overhead:} Unlike PyTorch, XGBoost's custom objective API receives raw logits $z$ and must compute the probabilities $p$ from scratch to evaluate both the gradient ($g = p-y$) and the Hessian ($h = 1/w_\ell(p)$) at every boosting step. As demonstrated by the \textit{Python Logistic} control, the software ``callback tax'' accounts for only a minor $\sim 12\%$ slowdown. The jump to $36.50$ seconds is fundamentally a mathematical penalty caused by evaluating the quartic mapping on CPU millions of times. While this results in a $\sim 4.5\times$ slowdown relative to the Python baseline, processing over 2 million samples per second remains extremely tractable for practical tabular datasets.
\end{itemize}
These results confirm that the canonical pair allows deep learning implementations to remain as fast as standard likelihood methods, while tree-based adaptations incur a manageable cost.

\newpage
\section{Detailed Experimental Setup}
\label{sec:detailed_experiments}

\subsection{Datasets}

\textbf{IHDP.} The Infant Health and Development Program (IHDP) benchmark is derived from a randomized experiment studying the effect of home visits on cognitive test scores. We use the standard semi-synthetic version provided by \citet{ihdp}, which consists of $747$ units ($139$ treated, $608$ control) and $25$ covariates describing the children and their mothers.

\textbf{Jobs.} Adapted from the LaLonde dataset \citep{jobs} by \citet{jobs_bis}. It focuses on evaluating the effect of job training on employment status. The covariates are real, while the outcome $Y$ is simulated to provide a ground truth for evaluation.

\textbf{Kang \& Schafer.} Originally introduced as a challenge for missing data methods \citep{kang2007demystifying}, this benchmark is a standard ``stress test'' for causal estimators due to its severe model misspecification and weak overlap. We adapt it to the ATE setting by treating the missingness indicator as a treatment assignment $T$ and setting the Average Treatment Effect to $10$.  This simulation study is specifically designed to assess estimator robustness under model misspecification and weak overlap. For $N=2000$ units, latent covariates $Z_i \sim \mathcal{N}(0, I_4)$ are generated. The true propensity score and outcome are linear and logistic functions of these latent terms:
$$ e(z) = \sigma(-z_1 + 0.5z_2 - 0.25z_3 - 0.1z_4) $$
$$ Y = 210 + 27.4z_1 + 13.7z_2 + 13.7z_3 + 13.7z_4 + \epsilon $$
Crucially, the estimators observe only non-linear transformations of these covariates, $X = [e^{z_1/2}, z_2/(1+e^{z_1}) + 10, (z_1 z_3/25 + 0.6)^3, (z_2 + z_4 + 20)^2]$, rendering the true model misspecified relative to standard linear adjustments.

\textbf{ACIC 2017.} The Atlantic Causal Inference Conference (ACIC) 2017 Data Challenge~\citep{acic2017} provides a large-scale evaluation bed based on real-world covariates ($N=4802$, $d=58$) sourced from the IHDP study. While the covariates are fixed and exhibit realistic correlations, the treatment assignment and potential outcomes are simulated across $32$ distinct Data Generating Processes (DGPs). These DGPs are constructed to systematically vary the difficulty of the causal inference task along varying dimensions, including the non-linearity of the outcome model, the signal-to-noise ratio, and—most critically for our analysis—the strength of selection (overlap). This design allows us to stratify performance based on how heavily the propensity scores concentrate near the boundaries ($0$ and $1$).

\textbf{Ground Truth ATE Calculation.} Across all datasets except Kang \& Schafer (IHDP, Jobs, and ACIC 2017) in which we arbitrarily set the ATE to $10$ , the use of synthetic or semi-synthetic outcome models gives us access to the true potential outcome functions, denoted as $\mu_0(X)$ and $\mu_1(X)$. For each dataset, the ground truth Average Treatment Effect (ATE) is computed as the sample mean of the individual treatment effects:
$$ \tau^* = \frac{1}{N} \sum_{i=1}^N (\mu_1(x_i) - \mu_0(x_i)) $$

\subsection{Implementation Details}
\label{sec:implementation_details}

\paragraph{1. Linear Models.}
For the standard benchmarks (Figure 2), we implemented Logistic Regression using the \texttt{scipy.optimize} library to accommodate custom loss functions (specifically our MSE-aware proper scoring rules). We use the L-BFGS-B algorithm, explicitly supplying the analytical gradient to ensure stable convergence.

\paragraph{2. Multi-Layer Perceptron (MLP).}
For the experiments, we used a simple MLP architecture:
\begin{itemize}[leftmargin=*]
    \item \textbf{Architecture:} Input layer ($d=58$), two hidden layers of $[64, 32]$ units with ReLU activation, and a single output unit.
    \item \textbf{Activation:}
    \begin{itemize}
        \item \textit{Baseline:} Standard Sigmoid function.
        \item \textit{Ours:} The custom Quartic Canonical probability mapping (Definition~\ref{def:can_link}).
    \end{itemize}
    \item \textbf{Backward Pass:} While our custom probability mapping is fully differentiable and can act as a drop-in replacement using standard autograd, backpropagating through the quartic equation solver introduces computational overhead (see Appendix~\ref{sec:computational_benchmarks}). To maximize efficiency, we implemented a custom backward pass. Because $\ell$ and $\sigma_\ell$ form a canonical pair, we directly apply the analytical gradient with respect to the logits, $\frac{\partial \ell}{\partial z} = p - y$, bypassing the root-finding routine entirely during backpropagation.
    \item \textbf{Optimization:} We used the Adam optimizer with a batch size of 2048. The learning rate was tuned per dataset (see Hyperparameter Tuning). Early stopping was triggered if the validation loss—calculated using the respective training objective—did not improve for 10 consecutive epochs.
\end{itemize}

\paragraph{3. Gradient Boosting (XGBoost).}
Unlike deep learning frameworks that rely on automatic differentiation through a computational graph, XGBoost utilizes a second-order approximation for tree boosting and requires the explicit analytical gradient ($g$) and Hessian ($h$) of the objective with respect to the raw logits $z$. Thanks to our canonical link, these derivatives simplify dramatically, allowing us to implement a custom objective function that circumvents the complex probability mapping entirely:
\begin{itemize}[leftmargin=*]
    \item \textbf{Gradient:} $g = \frac{\partial \ell}{\partial z} = p - y$. 
    \item \textbf{Hessian:} $h = \frac{\partial^2 \ell}{\partial z^2} = \frac{\partial p}{\partial z} = \frac{1}{w_\ell(p)}$.
\end{itemize}

Number of estimators, max depth and learning rate were tuned per dataset and setups.
\paragraph{4. Training Methodology \& Cross-Fitting.}
To avoid overfitting and ensure valid inference, we employed a $K$-fold cross-fitting strategy (typically $K=2$ or $K=10$ depending on sample size) as recommended by \citet{chernozhukov2018double}. For each fold, the propensity model was trained on the remaining $K-1$ folds and used to predict propensity scores for the held-out fold. This procedure yields clean out-of-sample propensity estimates for the entire dataset.

\paragraph{5. Hyperparameter Tuning.}
Hyperparameters were selected via a grid search procedure. For each model and benchmark configuration, we evaluated a grid of parameters (e.g., regularization strength $C$, learning rate etc.). The optimal combination was selected by minimizing the \textit{respective training objective} (Log-Loss for standard models, and the specific Proper Scoring Rule for our custom models) on a held-out set.

\paragraph{6. Outcome Modeling.}
For AIPW estimator we need to model outcome. We employed distinct outcome models tailored to the dimensionality of each benchmark:
\begin{itemize}[leftmargin=*]
    \item \textbf{Low-Dimensional (IHDP, Jobs, Kang \& Schafer):} We used standard Linear Regression to estimate potential outcomes $\mu_0(x)$ and $\mu_1(x)$.
    \item \textbf{High-Dimensional (ACIC 2017):} We utilized Lasso Regression (L1-regularized linear model) to perform implicit feature selection and handle the high-dimensional covariate space ($d=58$).
\end{itemize}

\subsection{Baselines}
\label{sec:baselines}

We compared our method against the following baselines:

\begin{itemize}[leftmargin=*]
    \item \textbf{CBPS (Covariate Balancing Propensity Score):} We utilized a custom PyTorch implementation of the method proposed by \citet{ImaiRatkovic2014}. We optimize a relaxed joint objective combining the negative log-likelihood and a covariate balance penalty (defined as the $L_2$ distance between weighted group means), which is optimized via Adam. It is essentially the method proposed by~\citep{shang2025robust}.
    
    \item \textbf{CBSR (Covariate Balancing Scoring Rule):} In the absence of public implementation, we implemented the theoretical loss function derived by \citet{zhao_cbsr} directly. We use a standard Sigmoid link function for probability mapping. Note that since the specific scoring rule does not correspond to the canonical link of the Sigmoid, the gradient does not simplify to the stable $p-y$ form. Consequently, we applied stronger $L_2$ regularization to prevent gradient explosion during optimization with L-BFGS-B.
    
    \item \textbf{SBW (Stable Balancing Weights):} We implemented the method of \citet{Zubizarreta03072015}. This method solves a constrained quadratic program to find minimum-variance weights that achieve approximate covariate balance.
    
    \item \textbf{Entropy Balancing (EB):} Following \citet{Hainmueller_2012}, we implemented a dual optimization approach (using L-BFGS-B) to find weights that minimize the entropy relative to uniform base weights, subject to moment-matching constraints. 
    
    \item \textbf{Trimming \& Clipping:} To assess robustness against positivity violations, we implemented standard post-processing heuristics: \textit{Clipping} restricts propensity scores to the interval $[\epsilon, 1-\epsilon]$ (with a tuned $\epsilon$), while \textit{Trimming} discards samples with scores outside this range entirely.
\end{itemize}

\paragraph{7. Estimators with Direct Weights (SBW, EB).}
For methods that learn balancing weights $W$ directly (instead of propensity scores $e$), we adapted the standard estimators to utilize these learned weights $W_i$ in place of the inverse probability terms ($1/\hat{e}_i$ or $1/(1-\hat{e}_i)$). This adaptation allows us to evaluate weight-based methods within the same doubly robust framework as propensity-based estimators.
\newpage
\section{Full numerical results}

\begin{table*}[h]
\centering
\caption{Comparison of methods on the \textbf{Kang \& Schafer} dataset. This dataset features significant overlap challenges. We report Bias, MAE, and RMSE, followed by the Mean Rank. \textbf{Bold} indicates best performance; \underline{underline} indicates second best.}
\label{tab:kang_schafer_results}
\begin{small}
\begin{sc}
\addtolength{\tabcolsep}{-4.5pt} 
\resizebox{\textwidth}{!}{%
\begin{tabular}{l ccc ccc ccc p{0.1cm} ccc}
\toprule
& \multicolumn{3}{c}{\textbf{Bias} (0)} & \multicolumn{3}{c}{\textbf{MAE} ($\downarrow$)} & \multicolumn{3}{c}{\textbf{RMSE} ($\downarrow$)} && \multicolumn{3}{c}{\textbf{Mean Rank} ($\downarrow$)} \\
\cmidrule(lr){2-4} \cmidrule(lr){5-7} \cmidrule(lr){8-10} \cmidrule(lr){12-14}
Method & IPW & Hajek & AIPW & IPW & Hajek & AIPW & IPW & Hajek & AIPW && IPW & Hajek & AIPW \\
\midrule
\textbf{Custom (Ours)} & \textbf{-0.73} & \textbf{-1.49} & \textbf{-4.83} & \textbf{2.13} & \textbf{1.63} & \textbf{4.83} & \textbf{2.68} & \textbf{1.95} & \textbf{5.07} && \textbf{1.95} & \textbf{1.04} & \textbf{1.18} \\
Logistic               & 6.17  & -5.50 & -10.16 & 12.21 & 6.88 & 10.16 & 66.90 & 8.12 & 46.51 && 4.58 & 4.59 & 6.38 \\
CBPS                   & \underline{1.03}  & -5.53 & -7.30  & \underline{3.24}  & 5.53 & 7.30  & \underline{4.13}  & \underline{5.75} & 7.65 && \underline{2.83} & \underline{3.11} & 6.89 \\
CBSR                   & 5.40  & \underline{-1.65} & -13.17 & 18.79 & \underline{4.30} & 13.17 & 102.03 & 7.65 & 63.22 && 8.00 & 7.99 & \underline{3.76} \\
EB                     & -5.67 & -5.67 & -5.76  & 5.67  & 5.67 & 5.76  & 5.83  & 5.83 & 5.93 && 4.38 & 3.57 & 3.77 \\
SBW                    & -7.85 & -5.70 & \underline{-5.59}  & 7.85  & 5.70 & \underline{5.59}  & 7.87  & 5.83 & \underline{5.75} && 6.09 & 3.76 & 3.36 \\
Clipping               & -3.50 & -7.58 & -6.53  & 4.99  & 7.58 & 6.53  & 6.08  & 7.75 & 6.80 && 4.12 & 5.98 & 5.37 \\
Trimming               & -4.82 & -7.96 & -6.01  & 5.46  & 7.96 & 6.01  & 6.54  & 8.10 & 6.21 && 4.05 & 5.95 & 5.29 \\
\bottomrule
\end{tabular}
}
\end{sc}
\end{small}
\end{table*}

\begin{table*}[ht]
\centering
\caption{Comparison of methods on the \textbf{Jobs} dataset. This real-world evaluation highlights the robustness of methods against variance. \textbf{Bold} indicates best performance; \underline{underline} indicates second best.}
\label{tab:jobs_results}
\begin{small}
\begin{sc}
\addtolength{\tabcolsep}{-4.5pt} 
\resizebox{\textwidth}{!}{%
\begin{tabular}{l ccc ccc ccc p{0.1cm} ccc}
\toprule
& \multicolumn{3}{c}{\textbf{Bias} ($0$)} & \multicolumn{3}{c}{\textbf{MAE} ($\downarrow$)} & \multicolumn{3}{c}{\textbf{RMSE} ($\downarrow$)} && \multicolumn{3}{c}{\textbf{Mean Rank} ($\downarrow$)} \\
\cmidrule(lr){2-4} \cmidrule(lr){5-7} \cmidrule(lr){8-10} \cmidrule(lr){12-14}
Method & IPW & Hajek & AIPW & IPW & Hajek & AIPW & IPW & Hajek & AIPW && IPW & Hajek & AIPW \\
\midrule
\textbf{Custom (Ours)} & 0.02 & -0.06 & -0.05 & \underline{0.02} & 0.06 & \underline{0.07} & \underline{0.03} & 0.07 & \underline{0.08} && \textbf{1.60} & 3.60 & \underline{3.40} \\
Logistic               & -0.46 & -0.08 & -0.06 & 0.46 & 0.08 & 0.07 & 0.46 & 0.08 & 0.09 && 6.00 & 4.50 & 4.40 \\
CBPS                   & 0.32 & 0.03 & \underline{-0.04} & 0.32 & \underline{0.03} & 0.12 & 0.36 & \underline{0.03} & 0.14 && 5.00 & \underline{2.09} & 5.91 \\
CBSR                   & -0.07 & -0.14 & -0.06 & 0.07 & 0.14 & 0.07 & 0.07 & 0.14 & 0.09 && 7.10 & 6.10 & 4.10 \\
EB                     & \textbf{0.00} & \textbf{0.00} & -0.05 & 0.07 & 0.07 & 0.09 & 0.08 & 0.08 & 0.10 && 2.80 & 3.80 & 5.00 \\
SBW                    & -0.11 & -0.15 & -0.05 & 0.11 & 0.15 & \underline{0.07} & 0.12 & 0.15 & \underline{0.08} && 4.10 & 7.10 & 3.60 \\
Trimming               & \underline{0.01} & \underline{-0.02} & \textbf{0.00} & \textbf{0.02} & \textbf{0.02} & \textbf{0.02} & \textbf{0.02} & \textbf{0.03} & \textbf{0.03} && \underline{1.70} & \textbf{1.40} & \textbf{1.80} \\
\bottomrule
\end{tabular}
}
\end{sc}
\end{small}
\end{table*}

\begin{table*}[h]
\centering
\caption{Comparison of methods on the \textbf{IHDP} dataset. We report Bias, MAE, and RMSE across three estimators, followed by the Mean Rank for each estimator. \textbf{Bold} indicates the best performance; \underline{underline} indicates the second best.}
\label{tab:ihdp_results}
\begin{small}
\begin{sc}
\addtolength{\tabcolsep}{-4.5pt} 
\resizebox{\textwidth}{!}{%
\begin{tabular}{l ccc ccc ccc p{0.1cm} ccc}
\toprule
& \multicolumn{3}{c}{\textbf{Bias} (0)} & \multicolumn{3}{c}{\textbf{MAE} ($\downarrow$)} & \multicolumn{3}{c}{\textbf{RMSE} ($\downarrow$)} && \multicolumn{3}{c}{\textbf{Mean Rank} ($\downarrow$)} \\
\cmidrule(lr){2-4} \cmidrule(lr){5-7} \cmidrule(lr){8-10} \cmidrule(lr){12-14}
Method & IPW & Hajek & AIPW & IPW & Hajek & AIPW & IPW & Hajek & AIPW && IPW & Hajek & AIPW \\
\midrule
\textbf{Custom (Ours)} & -0.08 & -0.08 & -0.06 & \underline{0.20} & 0.16 & 0.15 & \underline{0.34} & 0.29 & 0.22 && \underline{2.34} & 4.55 & 4.37 \\
Logistic               & -0.24 & \underline{-0.02} & -0.04 & 0.32 & \underline{0.12} & 0.14 & 0.51 & \textbf{0.16} & 0.19 && 3.06 & \underline{4.17} & 4.41 \\
CBPS                   & -1.00 & -0.07 & -0.04 & 1.00 & 0.14 & 0.15 & 1.39 & 0.23 & 0.20 && 5.93 & 4.74 & 4.84 \\
CBSR                   & 0.25  & 0.13  & \textbf{-0.03} & 0.32 & 0.26 & \textbf{0.13} & 0.51 & 0.41 & \textbf{0.18} && 8.00 & 4.37 & \underline{3.95} \\
EB                     & \textbf{-0.03} & -0.03 & \underline{-0.03} & \textbf{0.14} & 0.14 & 0.16 & \textbf{0.18} & 0.18 & 0.21 && \textbf{2.19} & 4.99 & 5.20 \\
SBW                    & -2.19 & \textbf{0.01} & \textbf{-0.03} & 2.20 & 0.13 & \underline{0.14} & 2.29 & \underline{0.17} & \underline{0.18} && 6.90 & 4.78 & \textbf{4.19} \\
Clipping               & -0.34 & -0.03 & -0.04 & 0.35 & \textbf{0.12} & \underline{0.14} & 0.53 & \underline{0.17} & 0.19 && 3.29 & \textbf{4.06} & 4.27 \\
Trimming               & -0.27 & -0.03 & -0.04 & 0.30 & \underline{0.12} & \underline{0.14} & 0.46 & \underline{0.17} & 0.19 && 4.31 & 4.33 & 4.78 \\
\bottomrule
\end{tabular}
}
\end{sc}
\end{small}
\end{table*}

\label{app:analysis}
\subsection{On standards benchmarks}
The empirical evaluation across IHDP, Jobs, and Kang \& Schafer reveals distinct performance profiles for the compared methods. 

The results on the \textbf{Kang \& Schafer} dataset (Table~\ref{tab:kang_schafer_results}) provide the strongest evidence for the utility of our method. This benchmark is a known ``stress test'' for causal inference due to its near-positivity violations, where units with extreme propensity scores cause traditional Inverse Probability Weighting (IPW) estimators to exhibit massive variance.

Likelihood-based methods, such as logistic and CBSR, suffer from exploding RMSE (reaching values as high as $102.03$). In contrast, our method achieves a Mean Rank of $1.04$ for the Hajek estimator and $1.18$ for AIPW. This performance indicates that our objective function is effectively the top performer on this benchmark.

In contrast, the \textbf{IHDP} and \textbf{Jobs} datasets (Tables~\ref{tab:jobs_results} and \ref{tab:ihdp_results}) represent more standard observational settings with higher overlap and all methods perform well.

\begin{itemize}
    \item  While methods like Trimming or Entropy Balancing occasionally take the top rank in these ``simpler'' settings, our method remains consistently better than the median method.
    \item On the Jobs dataset, our method achieves the best Mean Rank for the IPW estimator ($1.60$). This suggests that even when the dataset does not strictly require extreme regularization, our method does not ``over-smooth'' and is still competitive.
\end{itemize}

Overall, the results suggest that our tailored loss method serves as a ``high-reliability'' estimator. It provided top performance in difficult datasets where positivity is nearly violated.

\subsection{About Figure~\ref{fig:boxplot} methodology}
To facilitate comparison across benchmarks with varying effect sizes and scales, all performance metrics (Bias, MAE, RMSE) were normalized at the simulation level. For a given simulation run $s$ and estimator $e \in \{\text{IPW, Hajek, AIPW}\}$, the raw error of a specific method $m$ was standardized relative to the median performance of all competing methods in that run:

\begin{equation}
    \text{Normalized Error}_{m,s,e} = \frac{\text{Error}_{m,s,e}}{\text{median}_{m'}(|\text{Error}_{m',s,e}|)}
\end{equation}

Consequently, a normalized metric of $1.0$ represents the median performance of the pool; values $<1.0$ indicate performance superior to the median baseline.

The figure utilizes a hybrid visualization approach to address imbalances in simulation counts across datasets:
\begin{itemize}
    \item \textbf{Boxplots:} To prevent datasets with a larger number of simulation runs from dominating the visual distribution, we applied balanced resampling. The boxplots display a subset of data constructed by drawing exactly $100$ uniform samples from each dataset.
    \item \textbf{Scatter Points (Mean Aggregates):} While the boxplots display the empirical distribution of individual simulation runs (resampled to be visually balanced), the overlaid scatter markers represent the \textit{exact average} (mean) performance calculated on the complete, original data. We deliberately chose to display the mean rather than the median to capture specific failure modes. Baseline propensity models frequently produce extreme probabilities near 0 or 1, leading to severe bias and variance explosions in the downstream IPW estimator. Because the mean is sensitive to these extreme outliers, it serves as a strict diagnostic of a method's robustness against such catastrophic failures—vulnerabilities that a median would otherwise mask. Each point corresponds to this average normalized error for a specific dataset and estimator configuration (e.g., the average error of the `Custom' method on `IHDP' using `IPW'). This ensures that the true benchmark-level performance, including susceptibility to overlap violations, remains visible independent of the sampling used for the boxplots.
\end{itemize}

Finally, the \textbf{Mean Rank} (rightmost panel) was computed using a macro-averaging strategy to ensure equal weight per benchmark. Methods were first ranked by squared error within each individual simulation. These ranks were then averaged within each dataset, and finally averaged across all datasets to produce the final score.
\subsection{ACIC 2017 Results Analysis}
\label{app:acic_analysis}

The results for the 32 ACIC 2017 setups (Table~\ref{tab:acic_full_32}) illustrate the performance of our tailored objective relative to \textsc{Standard} likelihood training. 

We observe a consistent superiority of the \textsc{XGBoost} backbone over the \textsc{MLP} across nearly all DGPs. This aligns with established benchmarks regarding tree-based dominance on tabular data~\citep{NEURIPS2022_0378c769}.

\begin{table*}[p] 
\centering
\caption{Full results for the 32 ACIC setups. Comparison between \textbf{Standard} likelihood objectives and our \textbf{Custom} objective across \textbf{MLP} and \textbf{XGBoost} backbones. Results report RMSE ($\downarrow$).}
\label{tab:acic_full_32}
\begin{sc}
\footnotesize 
\renewcommand{\arraystretch}{1.3} 
\addtolength{\tabcolsep}{-2pt} 
\begin{tabularx}{\textwidth}{l @{\extracolsep{\fill}} ccc ccc|| ccc ccc}
\toprule
& \multicolumn{6}{c}{\textbf{MLP Backbone}} & \multicolumn{6}{c}{\textbf{XGBoost Backbone}} \\
\cmidrule(lr){2-7} \cmidrule(lr){8-13}
& \multicolumn{3}{c}{Standard} & \multicolumn{3}{c}{\textbf{Custom}} & \multicolumn{3}{c}{Standard} & \multicolumn{3}{c}{\textbf{Custom}} \\
\cmidrule(lr){2-4} \cmidrule(lr){5-7} \cmidrule(lr){8-10} \cmidrule(lr){11-13}
Setup ID & IPW & Hajek & AIPW & IPW & Hajek & AIPW & IPW & Hajek & AIPW & IPW & Hajek & AIPW \\
\midrule
Group\_Corr\_000 & 0.65 & 0.64 & 0.21 & 0.58 & 0.59 & 0.19 & 0.14 & 0.14 & 0.02 & 0.12 & 0.11 & 0.02 \\
Group\_Corr\_001 & 1.73 & 1.78 & 1.24 & 1.58 & 1.31 & 0.76 & 0.42 & 0.51 & 0.43 & 0.10 & 0.10 & 0.15 \\
Group\_Corr\_010 & 0.66 & 0.65 & 0.22 & 0.45 & 0.46 & 0.14 & 0.15 & 0.15 & 0.05 & 0.13 & 0.13 & 0.05 \\
Group\_Corr\_011 & 1.70 & 1.77 & 1.24 & 1.58 & 1.30 & 0.76 & 0.42 & 0.51 & 0.44 & 0.13 & 0.13 & 0.17 \\
Group\_Corr\_100 & 0.07 & 0.06 & 0.43 & 0.11 & 0.07 & 0.41 & 0.48 & 0.48 & 0.61 & 0.04 & 0.05 & 0.45 \\
Group\_Corr\_101 & 1.14 & 1.20 & 0.61 & 1.11 & 1.18 & 0.60 & 0.11 & 0.12 & 0.20 & 0.10 & 0.10 & 0.26 \\
Group\_Corr\_110 & 0.08 & 0.07 & 0.43 & 0.12 & 0.09 & 0.42 & 0.48 & 0.48 & 0.61 & 0.06 & 0.07 & 0.45 \\
Group\_Corr\_111 & 1.14 & 1.20 & 0.63 & 1.11 & 1.18 & 0.61 & 0.14 & 0.16 & 0.22 & 0.13 & 0.14 & 0.28 \\
\midrule
Hetero\_000      & 0.58 & 0.57 & 0.14 & 0.37 & 0.38 & 0.06 & 0.06 & 0.06 & 0.06 & 0.04 & 0.04 & 0.07 \\
Hetero\_001      & 1.76 & 1.80 & 1.26 & 1.61 & 1.33 & 0.78 & 0.44 & 0.53 & 0.45 & 0.13 & 0.12 & 0.12 \\
Hetero\_010      & 0.58 & 0.57 & 0.15 & 0.38 & 0.39 & 0.07 & 0.08 & 0.08 & 0.07 & 0.06 & 0.06 & 0.09 \\
Hetero\_011      & 1.76 & 1.81 & 1.27 & 1.60 & 1.33 & 0.79 & 0.45 & 0.54 & 0.47 & 0.16 & 0.15 & 0.16 \\
Hetero\_100      & 0.06 & 0.06 & 0.50 & 0.05 & 0.05 & 0.49 & 0.55 & 0.56 & 0.68 & 0.09 & 0.12 & 0.52 \\
Hetero\_101      & 1.17 & 1.22 & 0.64 & 1.13 & 1.21 & 0.63 & 0.10 & 0.11 & 0.18 & 0.13 & 0.14 & 0.28 \\
Hetero\_110      & 0.10 & 0.10 & 0.51 & 0.08 & 0.07 & 0.49 & 0.55 & 0.56 & 0.68 & 0.10 & 0.13 & 0.52 \\
Hetero\_111      & 1.17 & 1.23 & 0.65 & 1.14 & 1.21 & 0.64 & 0.13 & 0.15 & 0.20 & 0.16 & 0.17 & 0.30 \\
\midrule
IID\_000         & 0.62 & 0.62 & 0.20 & 0.44 & 0.46 & 0.13 & 0.06 & 0.06 & 0.02 & 0.12 & 0.12 & 0.02 \\
IID\_001         & 1.73 & 1.78 & 1.24 & 1.58 & 1.30 & 0.75 & 0.42 & 0.51 & 0.43 & 0.10 & 0.10 & 0.15 \\
IID\_010         & 0.66 & 0.65 & 0.22 & 0.45 & 0.46 & 0.14 & 0.15 & 0.15 & 0.06 & 0.15 & 0.15 & 0.06 \\
IID\_011         & 1.74 & 1.78 & 1.25 & 1.57 & 1.30 & 0.75 & 0.42 & 0.52 & 0.44 & 0.14 & 0.13 & 0.18 \\
IID\_100         & 0.06 & 0.05 & 0.43 & 0.11 & 0.07 & 0.41 & 0.48 & 0.48 & 0.61 & 0.04 & 0.05 & 0.45 \\
IID\_101         & 1.14 & 1.20 & 0.61 & 1.11 & 1.18 & 0.60 & 0.12 & 0.12 & 0.20 & 0.12 & 0.12 & 0.21 \\
IID\_110         & 0.11 & 0.09 & 0.42 & 0.13 & 0.09 & 0.42 & 0.48 & 0.48 & 0.61 & 0.07 & 0.08 & 0.45 \\
IID\_111         & 1.14 & 1.20 & 0.63 & 1.04 & 1.15 & 0.58 & 0.15 & 0.16 & 0.22 & 0.13 & 0.15 & 0.28 \\
\midrule
Nonadd\_000      & 1.43 & 1.38 & 0.05 & 0.88 & 0.94 & 0.19 & 0.14 & 0.14 & 0.24 & 0.14 & 0.14 & 0.24 \\
Nonadd\_001      & 4.34 & 4.49 & 2.63 & 3.86 & 3.23 & 1.39 & 1.11 & 1.35 & 0.87 & 0.24 & 0.24 & 0.52 \\
Nonadd\_010      & 0.93 & 0.89 & 0.13 & 0.54 & 0.61 & 0.09 & 0.13 & 0.13 & 0.14 & 0.13 & 0.13 & 0.16 \\
Nonadd\_011      & 2.65 & 2.75 & 1.76 & 2.51 & 2.72 & 1.72 & 0.69 & 0.84 & 0.63 & 0.24 & 0.23 & 0.29 \\
Nonadd\_100      & 0.12 & 0.10 & 1.18 & 0.13 & 0.10 & 1.17 & 1.07 & 1.08 & 1.40 & 0.14 & 0.17 & 1.21 \\
Nonadd\_101      & 2.62 & 2.73 & 1.08 & 2.39 & 2.61 & 0.97 & 0.12 & 0.15 & 0.43 & 0.12 & 0.15 & 0.43 \\
Nonadd\_110      & 0.11 & 0.11 & 0.73 & 0.11 & 0.10 & 0.72 & 0.73 & 0.74 & 0.93 & 0.13 & 0.16 & 0.76 \\
Nonadd\_111      & 1.60 & 1.68 & 0.80 & 1.56 & 1.67 & 0.78 & 0.15 & 0.17 & 0.27 & 0.15 & 0.17 & 0.27 \\
\bottomrule
\end{tabularx}
\end{sc}
\end{table*}
\newpage
\section{Proofs}
\label{App:proofs}

We assume standard causal inference conditions:
\begin{itemize}
    \item \textbf{Consistency:} $Y = T Y(1) + (1-T) Y(0)$
    \item \textbf{Unconfoundedness:} $\{Y(0), Y(1)\} \perp T \mid X$
    \item \textbf{Positivity:} $0 < e(X) < 1$
\end{itemize}
We define the conditional expected potential outcomes as:
\begin{equation*}
    \mu_t(X) = \mathbb{E}[Y(t) \mid X] \quad \text{for } t \in \{0,1\}.
\end{equation*}
\subsection{Unbiasedness of the Oracle IPW Estimators}
\label{app:unbiasedness_oracle}
Suppose that for each unit $i$ in the datset, we know $e(X_i)$, then the oracle IPW estimator is : 

$$\tau^{\mathrm{IPW}}_{\mathrm{ATE}} = \frac{1}{N}\sum_{i=1}^N \frac{Y_iT_i}{e(X_i)} - \frac{Y_i(1-T_i)}{1-e(X_i)}$$

We show that $\mathbb{E}[\tau^{\mathrm{IPW}}_{\mathrm{ATE}}] = \tau_{\mathrm{ATE}}$. First, we analyze the treated potential outcome term:

\begin{align*}
    \mathbb{E}\left[ \frac{Y T}{e(X)}\right] 
    &= \mathbb{E}\left[ \mathbb{E}\left[ \frac{Y T}{e(X)} \bigg| X \right]\right] 
    && \text{Law of Iterated Expectations} \\
    &= \mathbb{E}\left[ \mathbb{E}\left[ \frac{Y(1) T}{e(X)} \bigg| X \right]\right] 
    && \text{By Consistency ($Y=Y(1)$ when $T=1$)} \\
    &= \mathbb{E}\left[ \frac{\mathbb{E}[Y(1)|X] \cdot \mathbb{E}[T|X]}{e(X)} \right] 
    && \text{By Unconfoundedness ($Y(1) \perp T \mid X$)} \\
    &= \mathbb{E}\left[ \frac{\mu_1(X)e(X)}{e(X)} \right] 
    && \text{Definition of $e(X)$ and $\mu_1(X)$}\\
    &= \mathbb{E}\left[\mu_1(X)\right] 
    && \text{By Positivity ($e(X) > 0$ cancels out)}
\end{align*}

Similarly, for the control term, we can lead the same computation which leads us to the following result : 

\begin{tcolorbox}[colback=blue!5!white, colframe=blue!20!black]
\begin{equation}
    \label{eq:unbiasness_oracle_ipw}
    \mathbb{E}\left[ \frac{Y T}{e(X)}\right]  = \mathbb{E}\left[\mu_1(X)\right] \text{  and  } \mathbb{E}\left[ \frac{Y (1-T)}{1-e(X)}\right]  = \mathbb{E}\left[\mu_0(X)\right]
\end{equation}
\end{tcolorbox}

Finally, combining these results:

\begin{align*}
    \mathbb{E}\left[ \tau^{\mathrm{IPW}}_{\mathrm{ATE}} \right] 
    &= \mathbb{E}\left[ \frac{1}{N} \sum_{i=1}^N \left( \frac{Y_i T_i}{e(X_i)} - \frac{Y_i (1-T_i)}{1-e(X_i)} \right) \right] 
    && \text{Definition of Estimator} \\
    &= \mathbb{E}\left[ \frac{Y T}{e(X)} - \frac{Y (1-T)}{1-e(X)} \right] 
    && \text{By i.i.d. assumption and Linearity of Expectation} \\
    &= \mathbb{E}\left[\mu_1(X)\right] - \mathbb{E}\left[\mu_0(X)\right] 
    && \text{By Equation~\ref{eq:unbiasness_oracle_ipw} and Linearity of Expectation} \\
    &= \mathbb{E}[\mu_1(X) - \mu_0(X)] 
    && \text{Rearranging terms} \\
    &= \tau_{\mathrm{ATE}} 
    && \text{Definition of ATE}
\end{align*}

We now focus on proving the theoretical result stated in the core of the text.

\subsection{Proof of Theorem~\ref{thm:mse_bound} (Bounding the MSE)}

\subsubsection{Treating $\hat{e}$ as a Fixed Function via Cross-Fitting}

In Theorem~\ref{thm:mse_bound}, the derivation of the MSE bound requires treating the estimated propensity function $\hat{e}$ as fixed. In an in-sample estimation setting, this assumption would be violated because the estimator depends on the same data used for evaluation.

Our framework assumes a standard $K$-fold cross-fitting procedure. The dataset is partitioned into $K$ folds, and the propensity model used to evaluate any given unit is trained exclusively on the other $K-1$ folds. Because the model is fitted on independent, held-out data, the learned function $\hat{e}$ acts as a fixed, deterministic function when evaluating the target unit.

\subsubsection{Bias Derivation of the Plug-in IPW Estimator}

We recall the plug-in IPW estimator formula:
\begin{equation}
    \hat\tau^{\mathrm{IPW}}_{\mathrm{ATE}} = \frac{1}{N}\sum_{i=1}^N \left( \frac{Y_i T_i}{\hat e(X_i)} - \frac{Y_i(1-T_i)}{1-\hat e(X_i)} \right)
\end{equation}

We derive the bias $\mathrm{Bias}(\hat{\tau}^{\mathrm{IPW}}_{\mathrm{ATE}}) = \mathbb{E}[\hat{\tau}^{\mathrm{IPW}}_{\mathrm{ATE}}] - \tau_{\mathrm{ATE}}$. By the i.i.d. assumption, the expectation of the empirical average simplifies to the expectation of a single observation:
\begin{align*}
    \mathbb{E}\left[\hat\tau^{\mathrm{IPW}}_{\mathrm{ATE}}\right] &= \mathbb{E}\left[\frac{1}{N}\sum_{i=1}^N \left( \frac{Y_i T_i}{\hat e(X_i)} - \frac{Y_i(1-T_i)}{1-\hat e(X_i)} \right)\right] \\
    &= \underbrace{\mathbb{E}\left[\frac{Y T}{\hat e(X)}\right]}_{\text{treated term}} - \underbrace{\mathbb{E}\left[\frac{Y(1-T)}{1 - \hat e(X)}\right]}_{\text{control term}}
\end{align*}

Using the definition $\tau_{\mathrm{ATE}} = \mathbb{E}[\mu_1(X)] - \mathbb{E}[\mu_0(X)]$, we can decompose the total bias into treated and control contributions:
\begin{equation}
    \mathrm{Bias}(\hat\tau^{\mathrm{IPW}}_{\mathrm{ATE}}) = \underbrace{\left(\mathbb{E}\left[\frac{Y T}{\hat e(X)}\right] - \mathbb{E}\left[\mu_1(X)\right]\right)}_{\text{Bias}_1} - \underbrace{\left(\mathbb{E}\left[\frac{Y(1-T)}{1 - \hat e(X)}\right] - \mathbb{E}\left[\mu_0(X)\right]\right)}_{\text{Bias}_0}
\end{equation}

\paragraph{Treated Term:}
Following a similar proof as in Section~\ref{app:unbiasedness_oracle} and using $\mathbb{E}[T|X] = e(X)$, the expectation of the treated component is:
\begin{align*}
    \mathbb{E}\left[ \frac{Y T}{\hat e(X)}\right] 
    &= \mathbb{E}\left[ \mathbb{E}\left[ \frac{Y T}{\hat e(X)} \bigg| X \right]\right] 
    && \text{Law of Iterated Expectations} \\
    &= \mathbb{E}\left[ \mathbb{E}\left[ \frac{Y(1) T}{\hat e(X)} \bigg| X \right]\right] 
    && \text{By Consistency ($Y=Y(1)$ when $T=1$)} \\
    &= \mathbb{E}\left[ \frac{\mathbb{E}[Y(1)|X] \cdot \mathbb{E}[T|X]}{\hat e(X)} \right] 
    && \text{By Unconfoundedness ($Y(1) \perp T \mid X$)} \\
    &= \mathbb{E}\left[ \frac{\mu_1(X)e(X)}{\hat e(X)} \right] 
    && \text{Definition of $e(X)$ and $\mu_1(X)$}
\end{align*}

Subtracting the true parameter $\mathbb{E}[\mu_1(X)]$ yields the treated bias contribution:
\begin{equation}
    \text{Bias}_1 = \mathbb{E}\left[ \mu_1(X) \left( \frac{e(X)}{\hat{e}(X)} - 1 \right) \right].
\end{equation}

\paragraph{Control Term:}
Similarly, since $\mathbb{E}[1-T|X] = 1-e(X)$:
\begin{equation}
    \mathbb{E}\left[ \frac{Y (1-T)}{1-\hat{e}(X)} \right] = \mathbb{E}\left[ \frac{\mu_0(X) (1-e(X))}{1-\hat{e}(X)} \right].
\end{equation}

Subtracting $\mathbb{E}[\mu_0(X)]$ yields the control bias contribution:
\begin{equation}
    \text{Bias}_0 = \mathbb{E}\left[ \mu_0(X) \left( \frac{1-e(X)}{1-\hat{e}(X)} - 1 \right) \right].
\end{equation}

Combining $\text{Bias}_1 - \text{Bias}_0$ gives the total bias:

\begin{tcolorbox}[colback=blue!5!white, colframe=blue!20!black]
\begin{equation}
    \label{eq:bias_ate}
    \mathrm{Bias}\left(\hat\tau^{\mathrm{IPW}}_{\mathrm{ATE}}\right) = \mathbb{E}\left[ \mu_1(X)\left(\frac{e(X)}{\hat{e}(X)}-1\right) - \mu_0(X)\left(\frac{1-e(X)}{1-\hat{e}(X)}-1\right) \right].
\end{equation}
\end{tcolorbox}

\subsubsection{Bounding the Bias}

Recall from Equation~\eqref{eq:bias_ate} that the bias is the difference between the treated and control bias terms. Using the triangle inequality, we get:
\begin{equation*}
    \left|\mathrm{Bias}\left(\hat\tau^{\mathrm{IPW}}_{\mathrm{ATE}}\right)\right| 
    \leq \left|\mathbb{E}\left[ \mu_1(X)\left(\frac{e(X)}{\hat{e}(X)}-1\right)\right]\right| + \left|\mathbb{E}\left[ \mu_0(X)\left(\frac{1-e(X)}{1-\hat{e}(X)}-1\right) \right]\right|.
\end{equation*}

Applying the Cauchy-Schwarz inequality to each term yields:
\begin{align*}
    \left|\mathrm{Bias}\left(\hat\tau^{\mathrm{IPW}}_{\mathrm{ATE}}\right)\right| 
    &\leq \sqrt{\mathbb{E}\left[ \mu_1(X)^2\right]}\sqrt{\mathbb{E}\left[\left(\frac{e(X)}{\hat{e}(X)}-1\right)^2\right]} + \sqrt{\mathbb{E}\left[ \mu_0(X)^2\right]} \sqrt{\mathbb{E}\left[\left(\frac{1-e(X)}{1-\hat{e}(X)}-1\right)^2\right]}.
\end{align*}

To bound the squared bias, we square the inequality above and use the algebraic inequality $(a+b)^2 \leq 2(a^2 + b^2)$:
\begin{align*}
    \left|\mathrm{Bias}\left(\hat\tau^{\mathrm{IPW}}_{\mathrm{ATE}}\right)\right|^2 
    &\leq 2 \mathbb{E}\left[ \mu_1(X)^2\right]\mathbb{E}\left[\left(\frac{e(X)}{\hat{e}(X)}-1\right)^2\right] + 2 \mathbb{E}\left[ \mu_0(X)^2\right] \mathbb{E}\left[\left(\frac{1-e(X)}{1-\hat{e}(X)}-1\right)^2\right].
\end{align*}

Let $K = \max\left(\mathbb{E}[\mu_1(X)^2], \mathbb{E}[\mu_0(X)^2]\right)$. We can then bound the coefficients of the error terms globally:
\begin{align*}
    \left|\mathrm{Bias}\left(\hat\tau^{\mathrm{IPW}}_{\mathrm{ATE}}\right)\right|^2 
    &\leq 2 K \left( \mathbb{E}\left[\left(\frac{e(X)}{\hat{e}(X)}-1\right)^2\right] + \mathbb{E}\left[\left(\frac{1-e(X)}{1-\hat{e}(X)}-1\right)^2\right] \right) \\
    &= 2 K \cdot \mathbb{E}\left[ \left(\frac{e(X)}{\hat{e}(X)}-1\right)^2 + \left(\frac{1-e(X)}{1-\hat{e}(X)}-1\right)^2 \right].
\end{align*}

\begin{tcolorbox}[colback=blue!5!white, colframe=blue!20!black]
Setting $C=2K$, we obtain the final bias bound:
\begin{equation}
    \left|\mathrm{Bias}\left(\hat{\tau}^{\mathrm{IPW}}_{\mathrm{ATE}}\right)\right|^2 \leq C \cdot \mathbb{E}\left[ d_{\mathrm{bias}}\left(e(X), \hat{e}(X)\right) \right],
\end{equation}
where the bias divergence component is defined as:
\begin{equation}
    d_{\mathrm{bias}}(p, q) = \left(\frac{p}{q}-1\right)^2 + \left(\frac{1-p}{1-q}-1\right)^2.
\end{equation}
\end{tcolorbox}

\subsubsection{Variance Derivation of the Plug-in IPW Estimator}

We recall the plug-in IPW estimator formula:
\begin{equation}
    \hat\tau^{\mathrm{IPW}}_{\mathrm{ATE}} = \frac{1}{N}\sum_{i=1}^N \left( \frac{Y_i T_i}{\hat e(X_i)} - \frac{Y_i(1-T_i)}{1-\hat e(X_i)} \right)
\end{equation}

Since the data are i.i.d., the variance of the empirical average scales with the sample size. Let $Z$ denote the term for a single observation:
\begin{equation}
    Z = \frac{Y T}{\hat{e}(X)} - \frac{Y(1-T)}{1-\hat{e}(X)}
\end{equation}
such that $\mathrm{Var}(\hat\tau^{\mathrm{IPW}}_{\mathrm{ATE}}) = \frac{1}{N} \mathrm{Var}(Z)$.

\paragraph{Algebraic Decomposition:}
We decompose $Z=A+B$ into an Oracle term ($A$) and an Estimation Error term ($B$) by adding and subtracting the true propensity $e(X)$ in both the treated and control components:
\begin{align*}
    A &= \frac{Y T}{e(X)} - \frac{Y(1-T)}{1-e(X)}
    && \text{Oracle Component} \\
    B &= Y T\left( \frac{1}{\hat{e}(X)} - \frac{1}{e(X)} \right) - Y(1-T)\left( \frac{1}{1-\hat{e}(X)} - \frac{1}{1-e(X)} \right)
    && \text{Estimation Error Component}
\end{align*}

\paragraph{Bounding the Variance:}
Using the inequality $\mathrm{Var}(A+B) \leq 2\mathrm{Var}(A) + 2\mathrm{Var}(B)$, we focus on bounding the variance of the estimation error, $\mathrm{Var}(B) \leq \mathbb{E}[B^2]$. We can separate $B$ into treated ($B_1$) and control ($B_0$) errors:
\begin{equation}
    B = \underbrace{Y T\left( \frac{1}{\hat{e}(X)} - \frac{1}{e(X)} \right)}_{B_1} - \underbrace{Y(1-T)\left( \frac{1}{1-\hat{e}(X)} - \frac{1}{1-e(X)} \right)}_{B_0}
\end{equation}

Since a unit cannot be simultaneously treated and control ($T(1-T) = 0$), the cross-term vanishes ($B_1 B_0 = 0$). Therefore, the square of the difference is the sum of the squares:
\begin{align*}
    B^2 &= (B_1 - B_0)^2 = B_1^2 + B_0^2 
    && \text{Disjoint supports of } T \text{ and } 1-T \\
    \mathbb{E}[B^2] &= \mathbb{E}[B_1^2] + \mathbb{E}[B_0^2]
    && \text{Linearity of Expectation}
\end{align*}

\paragraph{Treated Term ($B_1^2$):}
Computing the second moment for the treated error component:
\begin{align*}
    \mathbb{E}[B_1^2] 
    &= \mathbb{E} \left[ Y^2 T \left( \frac{1}{\hat{e}(X)} - \frac{1}{e(X)} \right)^2 \right] 
    && \text{Since } T^2 = T \\
    &= \mathbb{E} \left[ \mathbb{E}[Y(1)^2 \mid X] \cdot \mathbb{E}[T \mid X] \left( \frac{1}{\hat{e}(X)} - \frac{1}{e(X)} \right)^2 \right]
    && \text{Iterated Expectations} \\
    &= \mathbb{E} \left[ \mu_{2,1}(X)e(X) \left( \frac{1}{\hat{e}(X)} - \frac{1}{e(X)} \right)^2 \right]
    && \text{Def. of } \mu_{2,1} \text{ and } e(X)
\end{align*}
where $\mu_{2,1}(X) = \mathbb{E}[Y(1)^2 \mid X]$.

\paragraph{Control Term ($B_0^2$):}
Symmetrically, for the control group:
\begin{align*}
    \mathbb{E}[B_0^2] 
    &= \mathbb{E} \left[ \mathbb{E}[Y^2 (1-T) \mid X]\left( \frac{1}{1-\hat{e}(X)} - \frac{1}{1-e(X)} \right)^2  \right] 
    && \text{Analogous derivation} \\
    &= \mathbb{E} \left[ \mu_{2,0}(X)(1-e(X)) \left( \frac{1}{1-\hat{e}(X)} - \frac{1}{1-e(X)} \right)^2 \right]
    && \text{Def. of } \mu_{2,0} \text{ and } e(X)
\end{align*}
where $\mu_{2,0}(X) = \mathbb{E}[Y(0)^2 \mid X]$.

\paragraph{Total Variance Bound:}
Combining these results and dividing by $N$, the total variance of the estimator is bounded by:
\begin{align*}
    \mathrm{Var}(\hat{\tau}^{\mathrm{IPW}}_{\mathrm{ATE}}) \leq \frac{2}{N}\mathrm{Var}(A) + \frac{2}{N}\mathbb{E} \bigg[ &\mu_{2,1}(X)e(X) \left( \frac{1}{\hat{e}(X)} - \frac{1}{e(X)} \right)^2 \\
    + &\mu_{2,0}(X)(1-e(X)) \left( \frac{1}{1-\hat{e}(X)} - \frac{1}{1-e(X)} \right)^2 \bigg]
\end{align*}

We assume the conditional second moments are bounded by a constant $M^2$ (i.e., $\mu_{2,t}(X) \leq M^2$).

\begin{tcolorbox}[colback=blue!5!white, colframe=blue!20!black]
Setting $C_0 = \frac{2}{N}\mathrm{Var}(A)$ and $C_{\mathrm{var}} = 2M^2/N$
\begin{equation}
    \label{eq:var_ate}
    \mathrm{Var}(\hat\tau^{\mathrm{IPW}}_{\mathrm{ATE}}) \leq  C_0 + C_{\mathrm{var}} \cdot \mathbb{E}\left[ d_{\text{var}}(e(X), \hat e(X)) \right]\\
\end{equation}
Where the variance divergence component is defined as:
\begin{equation}
    d_{\text{var}}(p, q) = p\left(\frac{1}{q}-\frac{1}{p}\right)^2 + (1-p)\left(\frac{1}{1-q}-\frac{1}{1-p}\right)^2
\end{equation}
\end{tcolorbox}

\subsubsection{Combining Bias and Variance Terms}

We now combine the bounds on the squared bias and the variance to bound the Mean Squared Error (MSE). The MSE decomposes into the squared bias and the variance:
\begin{equation}
    \mathrm{MSE}(\hat{\tau}^{\mathrm{IPW}}_{\mathrm{ATE}}) = \left| \mathrm{Bias}(\hat{\tau}^{\mathrm{IPW}}_{\mathrm{ATE}}) \right|^2 + \mathrm{Var}(\hat{\tau}^{\mathrm{IPW}}_{\mathrm{ATE}}).
\end{equation}

From the previous sections, we established the following bounds. For the squared bias (setting the constant $C_{\mathrm{bias}} = C$):
\begin{align*}
    \left| \mathrm{Bias}(\hat{\tau}^{\mathrm{IPW}}_{\mathrm{ATE}}) \right|^2 
    &\leq C_{\mathrm{bias}} \cdot \mathbb{E}\left[ \left(\frac{e(X)}{\hat{e}(X)}-1\right)^2 + \left(\frac{1-e(X)}{1-\hat{e}(X)}-1\right)^2 \right] 
\end{align*}

And for the variance:
\begin{align*}
    \mathrm{Var}(\hat{\tau}^{\mathrm{IPW}}_{\mathrm{ATE}}) 
    &\leq C_0 + C_{\mathrm{var}} \cdot \mathbb{E} \left[ e(X) \left( \frac{1}{\hat{e}(X)} - \frac{1}{e(X)} \right)^2 + (1-e(X)) \left( \frac{1}{1-\hat{e}(X)} - \frac{1}{1-e(X)} \right)^2 \right] 
\end{align*}

We identify the terms inside the expectations as the divergence components defined in Theorem~\ref{thm:mse_bound}:
\begin{align*}
    d_{\mathrm{bias}}(e(X), \hat{e}(X)) &= \left(\frac{e(X)}{\hat{e}(X)}-1\right)^2 + \left(\frac{1-e(X)}{1-\hat{e}(X)}-1\right)^2 \\
    d_{\mathrm{var}}(e(X), \hat{e}(X)) &= e(X)\left(\frac{1}{\hat{e}(X)}-\frac{1}{e(X)}\right)^2 + (1-e(X))\left(\frac{1}{1-\hat{e}(X)}-\frac{1}{1-e(X)}\right)^2
\end{align*}

Let $C_1 = \max(C_{\mathrm{bias}}, C_{\mathrm{var}})$. Summing the inequalities, we bound the MSE:
\begin{align*}
    \mathrm{MSE}(\hat{\tau}^{\mathrm{IPW}}_{\mathrm{ATE}}) 
    &\leq C_0 + C_1 \mathbb{E}\left[d_{\mathrm{bias}}(e(X), \hat{e}(X))\right] + C_1 \mathbb{E}\left[d_{\mathrm{var}}(e(X), \hat{e}(X))\right] \\
    &= C_0 + C_1 \left(\mathbb{E}\left[ d_{\mathrm{bias}}(e(X), \hat{e}(X)) + d_{\mathrm{var}}(e(X), \hat{e}(X)) \right] \right)
    && \text{Linearity of Expectation}
\end{align*}

Using the definition $d_{\mathrm{task}}(p, q) = d_{\mathrm{bias}}(p, q) + d_{\mathrm{var}}(p, q)$, we obtain the final bound:

\begin{tcolorbox}[colback=blue!5!white, colframe=blue!20!black]
\begin{equation}
    \mathrm{MSE}(\hat{\tau}^{\mathrm{IPW}}_{\mathrm{ATE}}) \leq C_0 + C_1 \mathbb{E}\left[ d_{\mathrm{task}}(e(X), \hat{e}(X)) \right].
\end{equation}
\end{tcolorbox}

This completes the proof of Theorem~\ref{thm:mse_bound}.
\subsection{Extended Bias Bounds for AIPW and Hajek Estimators}

\subsubsection{Proof of MSE Bound for AIPW}
\label{App:proofs_aipw}

We analyze the Mean Squared Error (MSE) of the AIPW estimator. The AIPW estimator augments the standard IPW approach by incorporating outcome models $\hat{\mu}_1(X)$ and $\hat{\mu}_0(X)$, which are estimators of the true conditional expected potential outcomes $\mu_1(X) = \mathbb{E}[Y(1) \mid X]$ and $\mu_0(X) = \mathbb{E}[Y(0) \mid X]$. We recall its empirical formula:
\begin{equation}
    \hat{\tau}^{\mathrm{AIPW}}_{\mathrm{ATE}} = \frac{1}{N}\sum_{i=1}^N \left( \left( \hat{\mu}_1(X_i) + \frac{T_i(Y_i - \hat{\mu}_1(X_i))}{\hat{e}(X_i)} \right) - \left( \hat{\mu}_0(X_i) + \frac{(1-T_i)(Y_i - \hat{\mu}_0(X_i))}{1-\hat{e}(X_i)} \right) \right)
\end{equation}

\paragraph{Bias Derivation:}
We derive the bias $\mathrm{Bias}(\hat{\tau}^{\mathrm{AIPW}}_{\mathrm{ATE}}) = \mathbb{E}[\hat{\tau}^{\mathrm{AIPW}}_{\mathrm{ATE}}] - \tau_{\mathrm{ATE}}$. By the i.i.d. assumption, we can analyze the expectation of a single observation. Consider the treated term $\hat{\tau}_1$. Conditioning on $X$:
\begin{align*}
    \mathbb{E}\left[ \hat{\tau}_1 \mid X \right] &= \hat{\mu}_1(X) + \frac{e(X)}{\hat{e}(X)}(\mu_1(X) - \hat{\mu}_1(X)) \\
    &= \mu_1(X) + (\mu_1(X) - \hat{\mu}_1(X)) \left( \frac{e(X)}{\hat{e}(X)} - 1 \right)
\end{align*}

Subtracting the true parameter $\mu_1(X)$ and taking the expectation over $X$ yields the treated bias contribution:
\begin{equation}
    \mathrm{Bias}_1 = \mathbb{E}\left[ (\mu_1(X) - \hat{\mu}_1(X)) \left( \frac{e(X)}{\hat{e}(X)} - 1 \right) \right]
\end{equation}

Similarly, for the control term:
\begin{equation}
    \mathrm{Bias}_0 = \mathbb{E}\left[ (\mu_0(X) - \hat{\mu}_0(X)) \left( \frac{1-e(X)}{1-\hat{e}(X)} - 1 \right) \right]
\end{equation}

Applying the Cauchy-Schwarz inequality to the total bias $\mathrm{Bias}_1 - \mathrm{Bias}_0$ and squaring (assuming the errors of the outcome models are bounded by some constant $C_{\mu}$), we obtain the squared bias bound:
\begin{equation}
    \left|\mathrm{Bias}\left(\hat{\tau}^{\mathrm{AIPW}}_{\mathrm{ATE}}\right)\right|^2 \leq C_{\mu} \cdot \mathbb{E}\left[ d_{\mathrm{bias}}(e(X), \hat{e}(X)) \right]
\end{equation}

\paragraph{Variance Derivation:}
We decompose the variance into the oracle variance and the estimation error variance. The estimation error term $B_1$ for the treated unit is:
\begin{equation}
    B_1 = T(Y - \hat{\mu}_1(X)) \left( \frac{1}{\hat{e}(X)} - \frac{1}{e(X)} \right)
\end{equation}

Computing the second moment $\mathbb{E}[B_1^2]$ via the Law of Iterated Expectations:
\begin{align*}
    \mathbb{E}[B_1^2]
    &= \mathbb{E}\left[ \mathbb{E}[T \mid X] \cdot \mathbb{E}[(Y - \hat{\mu}_1(X))^2 \mid X] \left( \frac{1}{\hat{e}(X)} - \frac{1}{e(X)} \right)^2 \right] \\
    &= \mathbb{E}\left[ e(X) \cdot \sigma^2_{\mathrm{res}, 1}(X) \cdot \left( \frac{1}{\hat{e}(X)} - \frac{1}{e(X)} \right)^2 \right]
\end{align*}
where $\sigma^2_{\mathrm{res}, 1}(X) = \mathbb{E}[(Y(1) - \hat{\mu}_1(X))^2 \mid X]$ is the residual variance.

Assuming the conditional residual variances are bounded, we combine the treated and control terms. The total variance is bounded by the variance divergence:
\begin{equation}
    \mathrm{Var}(\hat{\tau}^{\mathrm{AIPW}}_{\mathrm{ATE}}) \leq C_0 + C_{\mathrm{var}} \mathbb{E}\left[ d_{\mathrm{var}}(e(X), \hat{e}(X)) \right]
\end{equation}

\paragraph{Total MSE Bound:}
Combining the squared bias and the variance, we conclude that the MSE of the AIPW estimator shares the exact same mathematical upper bound as the standard IPW estimator, differing only by the scaling constants:

\begin{tcolorbox}[colback=blue!5!white, colframe=blue!20!black]
\begin{equation}
    \mathrm{MSE}(\hat{\tau}^{\mathrm{AIPW}}_{\mathrm{ATE}}) \leq C'_0 + C'_1 \mathbb{E}\left[ d_{\mathrm{task}}(e(X), \hat{e}(X)) \right]
\end{equation}
\end{tcolorbox}

\subsubsection{Proof of Asymptotic MSE Bound for the Hájek Estimator}
\label{App:proofs_hajek}

Because the Hájek estimator is a ratio of two empirical sums, computing an exact finite-sample MSE is intractable. We therefore derive an asymptotic bound using a first-order Taylor expansion around the true expectations to linearize the estimation error.

\paragraph{Asymptotic Linearization:}
Consider the treated component of the Hájek estimator:
\begin{equation}
    \hat{\tau}_1^{\mathrm{Hajek}} = \frac{\frac{1}{N}\sum_{i=1}^N \frac{T_i Y_i}{\hat{e}(X_i)}}{\frac{1}{N}\sum_{i=1}^N \frac{T_i}{\hat{e}(X_i)}} = \frac{\hat{S}_1}{\hat{N}_1}
\end{equation}

Asymptotically, the expectations of the numerator and denominator evaluate to $\mathbb{E}[\hat{S}_1] = \theta_1$ (where $\theta_1 = \mathbb{E}[Y(1)]$) and $\mathbb{E}[\hat{N}_1] = 1$.

Using a first-order multivariate Taylor expansion (the Delta method) for the ratio $f(x,y) = y/x$ around the true expectations $(1, \theta_1)$, we linearize the error:
\begin{equation}
    \hat{\tau}_1^{\mathrm{Hajek}} - \theta_1 \approx \frac{1}{1}(\hat{S}_1 - \theta_1) - \frac{\theta_1}{1^2}(\hat{N}_1 - 1) = \hat{S}_1 - \theta_1 \hat{N}_1
\end{equation}

Substituting the empirical sums back into the linearized error yields:
\begin{equation}
    \hat{\tau}_1^{\mathrm{Hajek}} - \theta_1 \approx \frac{1}{N}\sum_{i=1}^N \frac{T_i}{\hat{e}(X_i)} (Y_i - \theta_1)
\end{equation}

This reveals a crucial structural property: the asymptotic error of the treated Hájek estimator is mathematically identical to the error of the standard IPW estimator, with the only difference being that it applies the IPW formula to the \textit{centered} outcome $\tilde{Y}_i(1) = Y_i(1) - \theta_1$ instead of the raw outcome $Y_i(1)$. The exact same logic applies symmetrically to the control group.

\paragraph{Bias and Variance Bounding:}
Because this linearized error structurally matches standard IPW, bounding the asymptotic MSE follows the exact same mathematical steps as Theorem~\ref{thm:mse_bound}, allowing us to be brief.

For the bias, the conditional expectation of the centered outcome is $\tilde{\mu}_1(X) = \mu_1(X) - \theta_1$. Applying the Cauchy-Schwarz inequality to the treated and control bias terms yields a bound driven by the identical $d_{\mathrm{bias}}$ divergence:
\begin{equation}
    \left|\mathrm{Bias}\left(\hat{\tau}^{\mathrm{Hajek}}_{\mathrm{ATE}}\right)\right|^2 \leq \tilde{C}_{\mathrm{bias}} \cdot \mathbb{E}\left[ d_{\mathrm{bias}}(e(X), \hat{e}(X)) \right]
\end{equation}

For the variance, assuming the conditional second moment of the centered potential outcomes is bounded ($\mathbb{E}[(Y(t) - \theta_t)^2 \mid X] \leq \tilde{M}^2$), the variance of the linearized terms is bounded by the identical $d_{\mathrm{var}}$ divergence:
\begin{equation}
    \mathrm{Var}(\hat{\tau}^{\mathrm{Hajek}}_{\mathrm{ATE}}) \leq \tilde{C}_0 + \tilde{C}_{\mathrm{var}} \mathbb{E}\left[ d_{\mathrm{var}}(e(X), \hat{e}(X)) \right]
\end{equation}

\paragraph{Total Asymptotic MSE Bound:}
Combining the squared bias and variance, we conclude that the asymptotic MSE of the Hájek estimator is upper-bounded by the exact same task-specific divergence $d_{\mathrm{task}}$ as the standard IPW estimator, differing only by the scaling constants:

\begin{tcolorbox}[colback=blue!5!white, colframe=blue!20!black]
\begin{equation}
    \mathrm{MSE}(\hat{\tau}^{\mathrm{Hajek}}_{\mathrm{ATE}}) \leq C'_0 + C'_1 \mathbb{E}\left[ d_{\mathrm{task}}(e(X), \hat{e}(X)) \right]
\end{equation}
\end{tcolorbox}

This formally substantiates the claim in Remark 3.6 that the tailored loss remains the theoretically optimal objective for the Hájek estimator.

\newpage
\section{Derivation of Tailored Scoring Rules}
\label{App:scoring_rules}

In this section, we provide the detailed derivations for the IPW-tailored scoring rules introduced in the main text. We apply the ``curvature matchin'' strategy to the ATE IPW estimator.

\subsection{General Methodology: Curvature Matching}

A strictly proper scoring rule $\ell(y, q)$ is uniquely characterized by a non-negative weight function $w_\ell(q)$. The divergence induced by this scoring rule is denoted $d_{\ell}(p, q)$.

The second-order Taylor expansion of this divergence around the truth $q=p$ is determined by the weight function:
\begin{equation}
    d_{\ell}(p, q) =\frac{1}{2} w_\ell(p) (p-q)^2+o\left((p-q^2)\right)).
\end{equation}

Our strategy is to identify the weight function $w_\ell(p)$ such that this local expansion matches the local expansion of the causal bias bound $d_{\mathrm{bias}}(p, q)$. Once $w_\ell(p)$ is identified, we recover the scoring rule via the relations:
\begin{align}
    \frac{\partial}{\partial q} \ell(1, q) &= - w_\ell(q)(1-q), \\
    \frac{\partial}{\partial q} \ell(0, q) &= w_\ell(q)q.
\end{align}
Integrating these differential equations yields the final loss functions.

\subsection{Derivation for ATE (Proof of Definition~\ref{def:ipw_psr})}

Recall the total task divergence for the ATE from Theorem~\ref{thm:mse_bound}, which includes both bias and variance components:
\begin{equation}
    d_{\mathrm{task}}(p, q) = \underbrace{\left(\frac{p}{q}-1\right)^2 + \left(\frac{1-p}{1-q}-1\right)^2}_{\text{Bias Part}} + \underbrace{p\left(\frac{1}{q}-\frac{1}{p}\right)^2 + (1-p)\left(\frac{1}{1-q}-\frac{1}{1-p}\right)^2}_{\text{Variance Part}}.
\end{equation}

To apply curvature matching, we compute the second-order Taylor expansion of this function around $q=p$. 
The expansion of the bias term is:
\begin{equation}
    d_{\mathrm{bias}}(p, q) \underset{q\to p}{=} \left[ \frac{1}{p^2} + \frac{1}{(1-p)^2} \right] (p-q)^2 + o\left((p-q^2)\right).
\end{equation}
The expansion of the variance term is:
\begin{equation}
    d_{\mathrm{var}}(p, q) \underset{q\to p}{=} \left[ \frac{1}{p^3} + \frac{1}{(1-p)^3} \right] (p-q)^2 + o\left((p-q^2)\right).
\end{equation}

Matching the coefficients yields the tailored weight function:
\begin{equation}
    w_{\ell}(q) = 2\left[ \frac{1}{q^2}+\frac{1}{(1-q)^2} + \frac{1}{q^3}+\frac{1}{(1-q)^3} \right].
\end{equation}

We now recover the scoring rule by integrating the differential equations for strictly proper scores.

Using the relation $\frac{\partial}{\partial q} \ell(1, q) = - w_\ell(q)(1-q)$:
\begin{align*}
    \frac{\partial \ell(1, q)}{\partial q} 
    &= -\frac{2}{q^3} + \frac{2}{q} - \frac{2}{1-q} - \frac{2}{(1-q)^2}.
\end{align*}
Integrating each term with respect to $q$:
\begin{equation}
    \ell(1, q) = \frac{1}{q^2} + 2\log(q) + 2\log(1-q) - \frac{2}{1-q}.
\end{equation}

Using the relation $\frac{\partial}{\partial q} \ell(0, q) = w_\ell(q)q$:
\begin{align*}
    \frac{\partial \ell(0, q)}{\partial q} 
    = \frac{2}{q} + \frac{2}{q^2} + \frac{2}{(1-q)^3} - \frac{2}{1-q}.
\end{align*}
Integrating each term with respect to $q$:
\begin{equation}
    \ell(0, q) = 2\log(q) - \frac{2}{q} + \frac{1}{(1-q)^2} + 2\log(1-q).
\end{equation}

Combining the logarithmic terms into $\log(q(1-q))$ and assembling the final expression:
\begin{tcolorbox}[colback=blue!5!white, colframe=blue!20!black]
\begin{equation}
    \ell(t,q) =  t \left[\frac{1}{q^2}-\frac{2}{1-q}+2\log\big(q(1-q)\big)\right] +  (1-t) \left[\frac{1}{(1-q)^2}-\frac{2}{q}+2\log\big(q(1-q)\big)\right].
\end{equation}
\end{tcolorbox}

This matches Definition~\ref{def:ipw_psr}.

\subsection{Associated Canonical Mapping}
\label{app:canonical_mapping}
We seek the canonical probability mapping $\sigma_\ell$ associated with the scoring rule $\ell$. By definition, : $(\sigma^{-1}_\ell)'(p) = w_{\ell}(p)$. To find $\sigma_\ell$ we first integrate the weight function to find the relationship between $z$ and $p$. While the proof seem complicated, it just a pile of algebraic formulas that can be computed efficiently in python.

\textbf{1. Integral of the Weight Function}
Recall $w_{\ell}(p) = \frac{2}{p^2} + \frac{2}{p^3} + \frac{2}{(1-p)^2} + \frac{2}{(1-p)^3}$. We integrate term by term, setting the constant of integration to zero to enforce symmetry $\sigma_\ell(0)=0.5$:
\begin{align}
    z &= \int \left( 2p^{-2} + 2p^{-3} + 2(1-p)^{-2} + 2(1-p)^{-3} \right) dp \nonumber \\
      &= -\frac{2}{p} - \frac{1}{p^2} + \frac{2}{1-p} + \frac{1}{(1-p)^2} \nonumber \\
      &= \left( \frac{1}{(1-p)^2} - \frac{1}{p^2} \right) + 2\left( \frac{1}{1-p} - \frac{1}{p} \right).
\end{align}

\textbf{2. Change of Variable (The Quartic Equation)}
Let $u = \frac{1}{p(1-p)}$. Since $p \in (0,1)$, we have $u \ge 4$.
We rewrite the terms of $z$ using $u$. Note that $\frac{1}{1-p} - \frac{1}{p} = \frac{2p-1}{p(1-p)} = u(2p-1)$. Similarly, the squared term is $u^2(2p-1)$. Thus:
\begin{equation}
    z = (u^2 + 2u)(2p-1).
\end{equation}
Squaring both sides eliminates the dependence on the sign of $(2p-1)$:
\begin{equation}
    z^2 = (u^2 + 2u)^2 (2p-1)^2.
\end{equation}
Substituting the identity $(2p-1)^2 = 1 - 4p(1-p) = 1 - \frac{4}{u} = \frac{u-4}{u}$:
\begin{equation}
    z^2 = (u^2+2u)^2 \left(\frac{u-4}{u}\right) = u(u+2)^2(u-4).
\end{equation}
Expanding the polynomial on the right yields a depressed quartic equation in $u$:
\begin{align}
    z^2 &= u(u^2 - 2u - 8)(u+2) \nonumber \\
    z^2 &= u^4 - 12u^2 - 16u \nonumber \\
    0 &= u^4 - 12u^2 - 16u - z^2. \label{eq:quartic}
\end{align}

\textbf{3. Analytic Solution via Ferrari's Method}

We solve the quartic equation $u^4 - 12u^2 - 16u - z^2 = 0$ using Ferrari's method. The goal is to rewrite the equation as the equality of two perfect squares. 

First, we move the linear and constant terms to the right side and add a parameter $y$ to ``complete the square'' on the left side for the term $(u^2+y)^2$:
\begin{align}
    u^4 - 12u^2 &= 16u + z^2 \nonumber \\
    (u^2 + y)^2 &= u^4 + 2yu^2 + y^2 \nonumber \\
    &= (12u^2 + 16u + z^2) + 2yu^2 + y^2 \nonumber \\
    &= (2y + 12)u^2 + 16u + (y^2 + z^2). \label{eq:ferrari_rhs}
\end{align}
For the expression on the right-hand side of \eqref{eq:ferrari_rhs} to be a perfect square of the form $(ku + m)^2$, its discriminant with respect to $u$ must be zero. The quadratic is $Au^2 + Bu + C$ with $A=2y+12$, $B=16$, and $C=y^2+z^2$.
The condition $\Delta = B^2 - 4AC = 0$ yields:
\begin{align}
    16^2 - 4(2y+12)(y^2+z^2) &= 0 \nonumber \\
    256 - 8(y+6)(y^2+z^2) &= 0 \nonumber \\
    32 - (y+6)(y^2+z^2) &= 0.
\end{align}
Expanding this product gives the resolvent cubic equation in $y$:
\begin{align}
    32 - (y^3 + z^2y + 6y^2 + 6z^2) &= 0 \nonumber \\
    y^3 + 6y^2 + z^2y + (6z^2 - 32) &= 0. \label{eq:cubic_y}
\end{align}
To solve this cubic, we eliminate the quadratic term ($y^2$) using the substitution $y = 2v - 2$. We substitute this into \eqref{eq:cubic_y}:
\begin{equation}
    (2v-2)^3 + 6(2v-2)^2 + z^2(2v-2) + (6z^2 - 32) = 0.
\end{equation}
Expanding the terms:
\begin{align*}
    (8v^3 - 24v^2 + 24v - 8) + 6(4v^2 - 8v + 4) + (2z^2v - 2z^2) + 6z^2 - 32 &= 0 \\
    8v^3 + (-24v^2 + 24v^2) + (24v - 48v + 2z^2v) + (-8 + 24 - 2z^2 + 6z^2 - 32) &= 0 \\
    8v^3 + (2z^2 - 24)v + (4z^2 - 16) &= 0.
\end{align*}
Finally, dividing the entire equation by 8 yields the depressed cubic in $v$:
\begin{equation}
    v^3 + \underbrace{\left(\frac{z^2}{4} - 3\right)}_{A} v + \underbrace{\left(\frac{z^2}{2} - 2\right)}_{B} = 0.
\end{equation}
We define the discriminant of the depressed cubic as $\Delta = \left(\frac{B}{2}\right)^2 + \left(\frac{A}{3}\right)^3$. The solution for $v$ branches based on the sign of $\Delta$:

\begin{itemize}
    \item \textbf{Case 1: $\Delta > 0$ (Hyperbolic Solution).} \\
    In this regime, the cubic has one real root and two complex conjugate roots. We use Cardano's formula to find the unique real root:
    \begin{equation}
        v = \sqrt[3]{-\frac{B}{2} + \sqrt{\Delta}} + \sqrt[3]{-\frac{B}{2} - \sqrt{\Delta}}.
    \end{equation}

    \item \textbf{Case 2: $\Delta \leq 0$ (Trigonometric Solution).} \\
    In this regime we select the following root:
    \begin{equation}
        v = 2\sqrt{-\frac{A}{3}} \cos\left( \frac{1}{3} \arccos\left( \frac{-B/2}{\sqrt{-(A/3)^3}} \right) \right).
    \end{equation}
\end{itemize}

Once $v$ is computed, we proceed to recover $u$ and subsequently $p$.

\textbf{4. Recovering $u$ from the Perfect Square}

The purpose of finding the root $v$ of the resolvent cubic is to force the right-hand side of our quartic equation to become a perfect square. This allows us to reduce the equation from degree 4 to degree 2.

Recall our equation from step 3:
\begin{equation}
    (u^2 + y)^2 = (2y + 12)u^2 + 16u + (y^2 + z^2).
\end{equation}

By setting $y = 2v - 2$, we ensured the discriminant of the quadratic on the right is zero. We can now write the coefficient of $u^2$ explicitly in terms of $v$:
\begin{equation}
    2y + 12 = 2(2v-2) + 12 = 4v + 8 = 4(v+2).
\end{equation}
Because the discriminant is zero, the entire quadratic on the right can be factored as a perfect square $(Mu+N)^2$, where $M = \sqrt{4(v+2)} = 2\sqrt{v+2}$. The constant term $N$ is determined by the linear coefficient $16 = 2MN$, which implies $N = \frac{8}{M} = \frac{4}{\sqrt{v+2}}$.

Thus, the original quartic equation becomes an equality of two perfect squares:
\begin{equation}
    (u^2 + y)^2 = \left( 2\sqrt{v+2} \, u + \frac{4}{\sqrt{v+2}} \right)^2.
\end{equation}

Taking the square root of both sides allows us to solve a simple quadratic equation. We choose the positive branch to find the largest root for $u$:
\begin{equation}
    u^2 + y = 2\sqrt{v+2} \, u + \frac{4}{\sqrt{v+2}}.
\end{equation}
Rearranging terms to standard quadratic form $u^2 - Bu + C = 0$:
\begin{equation}
    u^2 - \left( 2\sqrt{v+2} \right) u + \left( y - \frac{4}{\sqrt{v+2}} \right) = 0.
\end{equation}

Finally, solving for $u$ yields:
\begin{equation}
    u = \sqrt{v+2} + \sqrt{ (v+2) - \left( y - \frac{4}{\sqrt{v+2}} \right) }.
\end{equation}
Substituting $y=2v-2$ back into the expression gives the final formula:
\begin{equation}
    u = \sqrt{v+2} + \sqrt{ 4 - v + \frac{4}{\sqrt{v+2}} }.
\end{equation}

\textbf{5. Recovering the Probability}

We now invert the variable transformation to recover the probability $p$. Recall the definition:
\begin{equation}
    u = \frac{1}{p(1-p)}.
\end{equation}
Rearranging this equation yields a standard quadratic equation in terms of $p$:
\begin{align}
    p(1-p) &= \frac{1}{u} \nonumber \\
    p^2 - p + \frac{1}{u} &= 0.
\end{align}
Solving for $p$ using the quadratic formula:
\begin{equation}
    p = \frac{1 \pm \sqrt{1 - \frac{4}{u}}}{2}.
\end{equation}
Since the link function $g(p)$ is monotonic, the sign of the root is determined by the sign of the input $z$. The canonical link maps $p < 0.5$ to $z < 0$ and $p > 0.5$ to $z > 0$. Thus, we have:
\begin{equation}
    p = \frac{1}{2} \left( 1 + \mathrm{sgn}(z) \sqrt{1 - \frac{4}{u}} \right).
\end{equation}
This confirms that the inverse of the canonical link is analytically tractable and can be implemented efficiently without numerical optimization.
\subsection{Derivation of Bias-Only Scoring Rules}
\label{app:bias_only_psr}
\begin{remark}[Bias-Variance Trade-off in Link Functions]
We note that excluding the variance terms ($1/p^3$) from the weight function simplifies the inversion to a quadratic equation, yielding a lighter ``Bias-Only'' loss. While computationally simpler, our experiments suggest that the full quartic link provides superior results.
\end{remark}

Recall the bias component of the divergence from Theorem~\ref{thm:mse_bound}:
\begin{equation}
    d_{\mathrm{bias}}(p, q) = \left(\frac{p}{q}-1\right)^2 + \left(\frac{1-p}{1-q}-1\right)^2.
\end{equation}

To apply curvature matching, we compute the second-order Taylor expansion around $q=p$:
\begin{equation}
    d_{\mathrm{bias}}(p, q) \underset{q\to p}{=}\left[ \frac{1}{p^2} + \frac{1}{(1-p)^2} \right] (p-q)^2 + o\left((p-q)^2\right).
\end{equation}

Matching the coefficients yields the tailored weight function:
\begin{equation}
    w_{\text{bias}}(q) = \frac{2}{q^2} + \frac{2}{(1-q)^2}.
\end{equation}

We recover the scoring rule by integrating the differential equations for strictly proper scores.

Using the relation $\frac{\partial}{\partial q} \ell(1, q) = - w(q)(1-q)$:
\begin{align*}
    \frac{\partial \ell(1, q)}{\partial q} 
    &= -(1-q)\left[ \frac{2}{q^2} + \frac{2}{(1-q)^2} \right] \\
    &= -\frac{2}{q^2} + \frac{2}{q} - \frac{2}{1-q}.
\end{align*}
Integrating with respect to $q$:
\begin{equation}
    \ell(1, q) = \frac{2}{q} + 2\log(q) + 2\log(1-q).
\end{equation}

Using the relation $\frac{\partial}{\partial q} \ell(0, q) = w(q)q$:
\begin{align*}
    \frac{\partial \ell(0, q)}{\partial q} 
    &= q\left[ \frac{2}{q^2} + \frac{2}{(1-q)^2} \right] \\
    &= \frac{2}{q} + \frac{2}{(1-q)^2} - \frac{2}{1-q}.
\end{align*}
Integrating with respect to $q$:
\begin{equation}
    \ell(0, q) = 2\log(q) + \frac{2}{1-q} + 2\log(1-q).
\end{equation}

Combining terms yields the final Bias-Only scoring rule:
\begin{equation}
    \ell_{\text{bias}}(y,q) = \frac{2y}{q} + \frac{2(1-y)}{1-q} + 2\log(q(1-q)).
\end{equation}

\subsection{Associated Canonical Mapping}

We seek the canonical link function $\sigma_\ell$ such that $(\sigma^{-1}_\ell)'(p) = w_{\text{bias}}(p)$. 

\textbf{1. Integral of the Weight Function}
We integrate the weight function to find $z = \sigma^{-1}_\ell(p)$:
\begin{align}
    z &= \int \left( \frac{2}{p^2} + \frac{2}{(1-p)^2} \right) dp \nonumber \\
      &= -\frac{2}{p} + \frac{2}{1-p} \nonumber \\
      &= \frac{-2(1-p) + 2p}{p(1-p)} \nonumber \\
      &= \frac{2(2p-1)}{p(1-p)}. \label{eq:bias_link}
\end{align}

\textbf{2. Inversion (The Quadratic Equation)}
To find the inverse link $p = \sigma^{-1}_\ell(z)$, we solve Equation~\eqref{eq:bias_link} for $p$.
Rearranging the terms:
\begin{align}
    z p(1-p) &= 4p - 2 \nonumber \\
    zp - zp^2 &= 4p - 2 \nonumber \\
    zp^2 + (4-z)p - 2 &= 0.
\end{align}
This is a standard quadratic equation $Ap^2 + Bp + C = 0$ with $A=z$, $B=4-z$, and $C=-2$.
Calculating the discriminant $\Delta$:
\begin{equation}
    \Delta = (4-z)^2 - 4(z)(-2) = 16 - 8z + z^2 + 8z = z^2 + 16.
\end{equation}
Since $\Delta = z^2 + 16 > 0$ for all real $z$, real solutions always exist. Solving for $p$:
\begin{equation}
    p = \frac{-(4-z) \pm \sqrt{z^2+16}}{2z} = \frac{z-4 \pm \sqrt{z^2+16}}{2z}.
\end{equation}
We select the root that maps to the interval $(0,1)$.
For any $z \neq 0$, the valid root is given by the positive branch of the numerator relative to the sign of $z$. The unified formula is:
\begin{equation}
    p = \frac{z - 4 + \sqrt{z^2 + 16}}{2z}.
\end{equation}
(Note: At $z=0$, taking the limit yields $p=0.5$).

\end{document}